\DeclareMathOperator*{\argmax}{arg\,max}
\DeclareMathOperator*{\argmin}{arg\,min}
\newcommand{\longrnote}[1]{ &  &  \\   &  &  &  &  & \notag \text{\footnotesize\llap{#1}}}
\def\BState{\State\hskip-\ALG@thistlm}
\newtheorem{assumption}{Assumption}
\newtheorem{lemma}{Lemma}
\newtheorem{example}{Example}
\newtheorem{remark}{Remark}
\newtheorem{fact}{Fact}
\newtheorem{corollary}{Corollary}
\newtheorem{theorem}{Theorem}
\newcommand{\R}{{\mathbb{R}}}
\def\Ncal{{\mathcal{N}}}
\def\ba{{\boldsymbol{a}}}
\def\bb{{\boldsymbol{b}}}
\def\be{{\boldsymbol{e}}}
\def\bx{{\boldsymbol{x}}}
\def\bw{{\boldsymbol{w}}}
\def\by{{\boldsymbol{y}}}
\def\bu{{\boldsymbol{u}}}
\def\bz{{\boldsymbol{z}}}
\def\bone{{\mathbbm{1}}}
\newcommand{\relu}{\text{ReLU}}
\newcommand{\leakyrelu}{\text{LeakyReLU}}
\title{Inverting Deep Generative models, \\ One layer at a time}
\author[$\dagger$]{Qi Lei}
\author[$\dagger$]{\ \ Ajil Jalal}
\author[$\dagger\ddagger$]{\ \ Inderjit S. Dhillon}
\author[$\dagger$]{\ \ Alexandros G. Dimakis}
\affil[ ]{$^\dagger$ UT Austin \ \  $^\ddagger$ Amazon}
\affil[ ]{\texttt{\{leiqi@oden., ajiljalal@, inderjit@cs., dimakis@austin.\}utexas.edu} }
\begin{document}

\maketitle

\begin{abstract}

We study the problem of inverting a deep generative model with ReLU activations. 
Inversion corresponds to finding a latent code vector that explains observed measurements as much as possible. 
In most prior works this is performed by attempting to solve a non-convex optimization problem involving the generator. 
In this paper we obtain several novel theoretical results for the inversion problem. 

We show that for the realizable case, single layer inversion can be performed exactly in polynomial time, by solving a linear program. Further, we show that for multiple layers, inversion is NP-hard and the pre-image set can be non-convex. 

For generative models of arbitrary depth, we show that exact recovery is possible in polynomial time 
with high probability, if the layers are expanding and the weights are randomly selected. 
Very recent work analyzed the same problem for gradient descent inversion. Their analysis requires significantly higher expansion (logarithmic in the latent dimension) while our proposed algorithm can provably reconstruct even with constant factor expansion. 
We also provide provable error bounds for different norms for reconstructing noisy observations. Our empirical validation
demonstrates  that we obtain better reconstructions when the latent dimension is large.
\end{abstract}

\section{Introduction}

Modern deep generative models are demonstrating excellent performance as signal priors, frequently outperforming the previous state of the art for various inverse problems including denoising, inpainting, reconstruction from Gaussian projections and phase retrieval~(see e.g.~\cite{bora2017compressed,fletcher2017inference,gupta2018deep,dhar2018modeling,hand2018phase,tripathi2018correction} 
and references therein). Consequently, there is substantial work on improving compressed sensing with generative adversarial network (GANs) \cite{grover2018uncertainty,mardani2018neural,heckel2018deep,mixon2018sunlayer,pandit2019asymptotics}. Similar ideas have been recently applied also for sparse PCA with a generative prior \cite{aubin2019spiked}.

A central problem that appears when trying to solve inverse problems using deep generative models
is
\textit{inverting a generator}~\citep{bora2017compressed,hand2017global,shah2018solving}. We are interested in deep generative models, 
parameterized as feed-forward neural networks with ReLU/LeakyReLU activations. For a generator $G(\bz)$ that maps low-dimensional vectors in $\R^k$ to high dimensional vectors (e.g. images) in $\R^n$, we want to reconstruct 
the latent code $\bz^*$ if we can observe $\bx= G(\bz^*)$ (realizable case) or a noisy version $\bx= G(\bz^*)+\be$ where $\be$ denotes some measurement noise.
We are therefore interested in the optimization problem 
\begin{equation}
\label{prob1}
\argmin_{\bz}\|\bx-G(\bz)\|_p,
\end{equation}
for some $p$ norm. With this procedure, we learn a concise image representation of a given image $\bx\in\R^n$ as $\bz\in \R^k, k\ll n$. This applies to image compressions and denoising tasks as studied in \citep{heckel2018deep2,heckel2018deep}. Meanwhile, this problem is a starting point for general linear inverse problems: 
\begin{equation}
\argmin_{\bz}\|\bx-AG(\bz)\|_p,
\label{eqn:sensing}
\end{equation}
since several recent works leverage inversion as a key step in solving more general inverse problems, see e.g.~\cite{shah2018solving,romano2017little}.
Specifically, Shah et al. \citep{shah2018solving} provide theoretical guarantees on obtaining the optimal solution for \eqref{eqn:sensing} with projected gradient descent, provided one could solve \eqref{prob1} exactly. This work provides a provable algorithm to perform this projection step under some assumptions.

Previous work focuses on the $\ell_2$ norm that works slowly with gradient descent~\cite{bora2017compressed,huang2018provably}. 
In this work, we focus on direct solvers and error bound analysis for $\ell_{\infty}$ and $\ell_1$ norm instead.\footnote{ Notice the relation between $\ell_p$ norm guarantees $\ell_p\geq \ell_q,1\leq p\leq q\leq \infty$. Therefore the studies on $\ell_1$ and $\ell_{\infty}$ is enough to bound all intermediate $\ell_p$ norms for $p\in [1,\infty)$.} Note
that this is a non-convex optimization problem even for a
single-layer network with ReLU activations. Therefore gradient descent may get stuck at local minimima or require a long time to converge. 
For example, for MNIST, compressing a single image by optimizing \eqref{prob1} takes on average several minutes and may need multiple restarts. 


\noindent \textbf{Our Contributions:} For the realizable case we show that for a single layer solving (\ref{prob1}) is equivalent to solving a linear program. For networks more than one layer, however, we show it is NP-hard to simply determine whether exact recovery exists. For a two-layer network we show that the pre-image in the latent space can be a non-convex set.

For realizable inputs and arbitrary depth we show that inversion is possible in polynomial time if the network layers have sufficient expansion and the weights are randomly selected. 
A similar result was established very recently for gradient descent~\cite{huang2018provably}.
We instead propose inversion by layer-wise Gaussian elimination. Our result holds even if 
each layer is expanding by a constant factor while~\cite{huang2018provably} requires a logarithmic multiplicative expansion in each layer.

For noisy inputs and arbitrary depth we propose two algorithms that rely on iteratively solving linear programs to reconstruct each layer. We establish provable error bounds on the reconstruction error when the weights are random and have constant expansion. We also show empirically that our method matches and sometimes outperforms gradient descent for inversion, especially when the latent dimension becomes larger. 

\section{Setup}
We consider deep generative models $G: \R^k \rightarrow \R^n$ with the latent dimension $k$ being smaller than 
the signal dimension $n$, parameterized by a $d$-layer feed-forward network of the form
\begin{equation}
G(\bz) = \phi_d(\phi_{d-1}(\cdots \phi_2(\phi_1(\bz))\cdots)),
\label{eqn:generator}
\end{equation}
where each layer $\phi_i(\ba)$ is defined as a composition of activations and linear maps: $\relu(W_i\ba+\bb_i)$. We focus on the ReLU activations $\relu(\ba) = \max\{\ba, \mathbf{0}\}$ applied coordinate-wise, and we will also consider the activation as $\leakyrelu(\ba)=\relu(\ba)+c\relu(-\ba)$, where the scaling factor $c\in (0,1)$ is typically 0.1\footnote{The inversion of LeakyReLU networks is much easier than ReLU networks and we therefore only mention it when needed.}. $W_i \in \R^{n_i\times n_{i-1}}$ are the weights of the network, and $\bb_i\in \R^{n_i}$ 
are the bias terms. Therefore, 
$n_0 = k$ and $n_d = n$ indicate the dimensionality of the input and output of the generator $G$. 
We use $\bz_i$ to denote the output of the $i$-th layer. 
Note that one can absorb the bias term $\bb_i, i=1,2,\cdots d$ into $W_i$ by adding one more dimension with a constant input. Therefore, without loss of generality, we sometimes  omit $\bb_i$ when writing the equation, unless we explicitly needed it. 

We use bold lower-case symbols for vectors, e.g. $\bx$, and $x_i$ for its coordinates. We use upper-case symbols for denote matrices, e.g. $W$, where $\bw_i$ is its $i$-th row vector. For a indexed set $I$, $W_{I,:}$ represents the submatrix of $W$ consisting of each $i$-th row of $W$ for any $i\in I$.

The central challenge is to determine the signs for the intermediate variables of the hidden layers. We refer to these sign patterns as "ReLU configurations" throughout the paper, indicating which neurons are `on' and which are `off'. 

\section{Invertibility for ReLU Realizable Networks}

In this section we study the realizable case, i.e., when we are given an observation vector $\bx$
for which there exists $\bz^*$ such that $\bx=G(\bz^*)$. 
In particular, we show that the problem is NP-hard for ReLU activations in general, but could be solved in polynomial time with some mild assumptions with high probability. We present our theoretical findings first and all proofs of the paper are presented later in the Appendix. 

\subsection{Inverting a Single Layer} 
We start with the simplest one-layer case to find if $\min_{\bz}\|\bx-G(\bz) \|_p=0,$
for any $p$-norm. Since the problem is non-convex, further assumptions of $W$ are required \cite{huang2018provably} for gradient descent to work. 
When the problem is realizable, however, to find feasible $\bz$ such that $\bx=\phi(\bz)\equiv\relu(W\bz+\bb)$, one could invert the function by solving a linear programming:
\begin{eqnarray}
\nonumber 
\bw_i^\top \bz + b_i = x_i, &\forall i \text{ s.t. }x_i>0\\
\bw_i^\top \bz + b_i\leq 0, &\forall i \text{ s.t. }x_i= 0
\label{eqn:polytope}
\end{eqnarray}
Its solution set is convex and forms a polytope, but possibly includes uncountable feasible points. Therefore, it becomes unclear how to continue the process of layer-wise inversion unless further assumptions are made. To demonstrate the challenges to generalize the result to deeper nets, we show that the solution set becomes non-convex, and to determine whether there exists any solution is NP-complete.

                                              
\subsection{Challenges to Invert a Two or More Layered ReLU Network}
As a warm-up, we first present the NP-hardness to recover a binary latent code for a two layer network, and then generalize it to the real-valued case.

\begin{theorem}[NP-hardness to Recover Binary Latent Code For Two-layer ReLU Network]
\label{lemma:NP-hard}
Given a two-layer ReLU network $G: \{\pm 1\}^k \rightarrow \R$ where weights are all fixed, and an observation $x$, the problem to determine whether there exists $\bz \in \{\pm 1\}^k$ such that $G(\bz)=x$ is NP-complete.
\end{theorem}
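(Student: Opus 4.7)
The plan is to reduce from the \textsc{Partition} problem (or equivalently, subset-sum with target equal to half the total). Given positive integers $a_1,\dots,a_k$, \textsc{Partition} asks whether there is a sign vector $\bs\in\{\pm 1\}^k$ with $\sum_i a_i s_i=0$; this is a classical NP-complete problem. The target observation will be $x=0$, and the reduction will build a two-layer ReLU generator $G$ of width $2$ in the hidden layer that computes $G(\bs)=|\ba^\top \bs|$, so that a valid latent code exists iff the \textsc{Partition} instance is solvable.

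The key identity powering the construction is $|u|=\relu(u)+\relu(-u)$, which is exactly what a two-layer ReLU network of width $2$ can implement. Concretely, I set
\[
W_1=\begin{pmatrix}\ba^\top\\ -\ba^\top\end{pmatrix}\in\R^{2\times k},\quad \bb_1=\mathbf{0},\quad W_2=(1,\,1),\quad b_2=0.
\]
The first layer outputs $\bh=(\relu(\ba^\top\bs),\relu(-\ba^\top\bs))^\top$ and the output layer returns $\relu(W_2\bh+b_2)=\relu(\relu(\ba^\top\bs)+\relu(-\ba^\top\bs))=|\ba^\top\bs|$, since the inner sum is already nonnegative. Hence $G(\bs)=0$ iff $\ba^\top\bs=0$, which is precisely the decision asked by the \textsc{Partition} instance. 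The reduction is obviously polynomial: it writes down a network whose size is linear in the input.

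For NP-membership, a candidate $\bs\in\{\pm 1\}^k$ serves as a short certificate; evaluating $G(\bs)$ and comparing to $x$ takes time polynomial in the description of $W_1,W_2$ and in $k$. Combining membership with hardness of \textsc{Partition} yields NP-completeness.

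I do not foresee a major obstacle here; the only subtle point is making sure the second ReLU does not interfere with the reduction, which is handled by noting that its argument $\relu(\ba^\top\bs)+\relu(-\ba^\top\bs)$ is always nonnegative so the outer $\relu$ acts as the identity. If one wanted to strengthen the statement to rule out trivial networks (e.g.\ if one insists that weights be in $\{0,\pm 1\}$ or bounded), one could instead reduce from \textsc{Subset Sum} with integer weights written in binary and introduce a bias term $b_2=-T$ together with a target $x$ chosen so that $G(\bs)=x$ encodes the subset-sum equality; the same $|u|=\relu(u)+\relu(-u)$ gadget drives the reduction.
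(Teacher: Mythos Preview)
Your proof is correct, but it takes a genuinely different route from the paper's. The paper reduces from \textsc{3SAT}: it builds a hidden layer with one neuron per clause, where each clause-neuron has three incoming weights in $\{-1,+1\}$ (sign chosen by the literal polarity) and bias $-2$, so that the neuron fires (outputs $1$) exactly when all three literals are false; the output layer sums the hidden neurons, and the target $x=0$ forces every clause to be satisfied. Your reduction from \textsc{Partition} via the $|u|=\relu(u)+\relu(-u)$ gadget is considerably more economical---only two hidden neurons and no bias---and is arguably more elegant for establishing plain NP-completeness. The trade-off is that your construction places the instance integers $a_1,\dots,a_k$ directly into the weight matrix, so it only yields weak NP-hardness (the problem could become easy if weights are given in unary or bounded by a polynomial), whereas the paper's \textsc{3SAT} reduction uses weights in $\{-1,0,1\}$ and bias $-2$ throughout, giving strong NP-hardness that persists even for networks with constant-magnitude weights. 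Both arguments are valid proofs of the stated theorem.
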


We defer the proof to the Appendix, which is constructive and shows the 3SAT problem is reducible to the above two-layer binary code recovery problem. Meanwhile, when the ReLU configuration for each layer is given, the recovery problem becomes to solve a simple linear system. Therefore the problem lies in NP, and together we have NP-completeness.
With similar procedure, we could also construct a 4-layer network with real input and prove the following statement:
\begin{theorem}[NP-hardness to Recover ReLU Networks with Real Domain]
\label{thm:real_hard}
Given a four-layered ReLU neural network $G(\bx): \R^k\rightarrow \R^2$ where weights are all fixed, and an observation vector $x\in \R^2$, the problem to determine whether there exists $\bz\in\R^k$ such that $G(\bz)=\bx$ is NP-complete.
\end{theorem}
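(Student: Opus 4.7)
The plan is a polynomial-time reduction from the binary 2-layer problem of Theorem~\ref{lemma:NP-hard}. Given a 2-layer binary network $G_{\mathrm{bin}}: \{\pm 1\}^k \to \R$ parameterized by $(W_1, \bb_1, W_2, b_2)$ and a scalar target $x$, I construct a 4-layer real-input ReLU network $\tilde G: \R^k \to \R^2$ together with a target $\tilde \bx = (x, 0)$ so that $\tilde G^{-1}(\tilde \bx) \ne \emptyset$ iff there exists $\hat \bz \in \{\pm 1\}^k$ with $G_{\mathrm{bin}}(\hat \bz) = x$. Membership in NP is analogous to Theorem~\ref{lemma:NP-hard}: guessing a ReLU configuration across all four layers (polynomially many bits) reduces inversion to linear feasibility over rationals.

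The key gadget is a ReLU-only encoding of the non-convex constraint $\bz \in \{\pm 1\}^k$. For each coordinate define
\[
u_i := \relu(z_i),\quad v_i := \relu(-z_i),\quad s_i := \relu(z_i - 1),\quad t_i := \relu(-z_i - 1),
\]
so that $z_i = u_i - v_i$, $|z_i| = u_i + v_i$, and $s_i + t_i = 0 \Leftrightarrow |z_i| \le 1$. Then $\bz \in \{\pm 1\}^k$ is equivalent to
\begin{equation*}
\sum_{i=1}^{k}(s_i + t_i) = 0 \quad\text{and}\quad \sum_{i=1}^{k}(u_i + v_i) = k,
\end{equation*}
because the first forces $|z_i| \le 1$ for every $i$, and the second then forces each $|z_i|$ to saturate at $1$ since $k$ numbers in $[0,1]$ can sum to $k$ only when all equal $1$.

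Layer $1$ of $\tilde G$ expands $\bz$ into the $4k$-dimensional non-negative vector $(\bu, \bv, \boldsymbol{s}, \boldsymbol{t})$ via an obvious choice of weights and bias. Layer $2$ computes in parallel (i) the $m_1$ neurons $\relu\bigl(W_1(\bu - \bv) + \bb_1\bigr) = \phi_1(\bz)$, which is the first layer of $G_{\mathrm{bin}}$ rewritten linearly in $(\bu, \bv)$, and (ii) the three scalar penalty neurons $\relu\bigl(\sum_i(s_i + t_i)\bigr)$, $\relu\bigl(\sum_i(u_i+v_i) - k\bigr)$, $\relu\bigl(k - \sum_i(u_i+v_i)\bigr)$. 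Layer $3$ applies $\phi_2$ of $G_{\mathrm{bin}}$ to the first block, producing the scalar $G_{\mathrm{bin}}(\bz)$ whenever $\bz$ is binary, while forwarding the three penalty neurons through identity ReLUs. Layer $4$ reads off $\tilde G(\bz) = (G_{\mathrm{bin}}(\bz),\ p_1 + p_2 + p_3)$ by a linear map, with the outer ReLU acting as the identity since every argument is non-negative.

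Correctness is immediate in the direction $(\Leftarrow)$: if $\hat \bz \in \{\pm 1\}^k$ realizes $x$ under $G_{\mathrm{bin}}$, the penalty vanishes and $\tilde G(\hat \bz) = (x, 0)$. For $(\Rightarrow)$, $\tilde G(\bz) = (x, 0)$ forces $p_1 = p_2 = p_3 = 0$, hence $\bz \in \{\pm 1\}^k$ by the gadget, and the first output coincides with $G_{\mathrm{bin}}(\bz) = x$. The main conceptual obstacle is precisely the non-convexity of $\{\pm 1\}^k$: the coordinate-wise lower bound $|z_i| \ge 1$ is not a convex function and cannot be enforced pointwise by a single ReLU expression. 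The aggregate identity $\sum_i |z_i| = k$ sidesteps this, since a single linear combination of ReLU activations, together with the per-coordinate upper bounds $|z_i| \le 1$, is tight only at the $2^k$ vertices of $[-1, 1]^k$.
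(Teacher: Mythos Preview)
Your proposal is correct and rests on the same combinatorial insight the paper uses: the set $\{\pm 1\}^k$ is exactly the intersection of the per-coordinate bounds $|z_i|\le 1$ with the aggregate equality $\sum_i |z_i|=k$, and both of these can be expressed through ReLU outputs. Where you diverge from the paper is in implementation. First, you reduce modularly from Theorem~\ref{lemma:NP-hard}, treating $G_{\mathrm{bin}}$ as a black box and spending layers~2--3 on reproducing its two layers; the paper instead re-embeds 3SAT directly, using one internal layer for the clause neurons. Second, you leave $\bz$ unconstrained and drive a separate penalty output $p_1+p_2+p_3$ to zero to certify the hypercube constraint, whereas the paper first \emph{clips} each input to $v_i=\min\{\max\{z_i,-1\},1\}$ with two ReLU layers and then forces $\sum_i|v_i|=k$ at the output. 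Both constructions land at four layers with a two-dimensional target; your penalty formulation is cleaner and more self-contained, while the paper's clipping makes the role of the first two layers geometrically explicit.
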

The conclusion holds naturally for generative models with deeper architecture. 


Meanwhile, although the preimage for a single layer is a polytope thus convex, it doesn't continue to hold for more than one layers, see Example \ref{example1}. Fortunately, we present next that some moderate conditions guarantee a polynomial time solution with high probability.

\subsection{Inverting Expansive Random Network in Polynomial Time}
\begin{assumption}
For a weight matrix $W\in \R^{n\times k}$, we assume 1) its entries are sampled i.i.d Gaussian, and 2) the weight matrix is tall: $n=c_0k$ for some constant $c_0\geq 2.1$.
\label{assump:0}
\end{assumption}
In the previous section, we indicate that the per layer inversion can be achieved through linear programming \eqref{eqn:polytope}. With Assumption \ref{assump:0} we will be able to prove that the solution is unique with high probability, and thus Theorem \ref{thm:LP-realize} holds for ReLU networks with arbitrary depth.
\begin{theorem}
\label{thm:LP-realize}
Let $G\in \R^k\rightarrow \R^n$ be a generative model from a $d$-layer neural network using ReLU activations. 
If for each layer, the weight matrix $W_i$ satisfies Assumption \ref{assump:0}, then for any prior $\bz^*\in \R^k$ and observation $\bx = G(\bz^*)$,
with probability $1 - e^{-\Omega(k)}$, $\bz^*$ could be achieved from $\bx$ by solving layer-wise linear equations. Namely, a random, expansive and realizable generative model could be inverted in polynomial time with high probability. 
\end{theorem}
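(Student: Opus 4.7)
The plan is to invert $G$ layer by layer, starting from $\bz_d := \bx$ and peeling off one ReLU at a time: given $\bz_i$, I recover $\bz_{i-1}$ and iterate until reaching $\bz_0 = \bz^*$. For each layer $i$, identify the active set $S := \{j : z_{i,j} > 0\}$; on $S$ the ReLU acts as the identity, so
\begin{equation*}
(W_i)_{S, :}\, \bz_{i-1} + (\bb_i)_S = (\bz_i)_S,
\end{equation*}
and $S^c$ contributes the inequalities $(W_i)_{S^c, :}\bz_{i-1} + (\bb_i)_{S^c} \leq 0$. Since $\bz_{i-1}$ itself satisfies this full system (realizability), showing uniqueness of the equality subsystem already suffices: once unique, the layer is inverted by solving one linear system (equivalent to the LP in (\ref{eqn:polytope}) degenerating to a singleton feasible region), which is polynomial time.

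Uniqueness at a single layer reduces to two claims, both proved with high probability over $W_i$. \textbf{(A) Enough equations:} $|S| \geq n_{i-1}$. Conditioning on $W_1, \ldots, W_{i-1}$ makes $\bz_{i-1}$ deterministic and independent of the fresh randomness in $W_i, \bb_i$. Absorbing $\bb_i$ into an appended constant coordinate of $\bz_{i-1}$, each $(W_i\bz_{i-1})_j$ is a mean-zero Gaussian scalar, independent across $j$, and therefore positive with probability exactly $1/2$ by symmetry. Hoeffding's inequality on these independent Bernoulli events gives $|S| \geq n_i/2 - O(\sqrt{n_i})$ with probability $1 - e^{-\Omega(n_i)}$; with $n_i = c_0 n_{i-1} \geq 2.1\, n_{i-1}$ this comfortably exceeds $n_{i-1}$. \textbf{(B) Full column rank:} $(W_i)_{S, :}$ has rank $n_{i-1}$. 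For an i.i.d.\ Gaussian matrix, each fixed $n_{i-1} \times n_{i-1}$ submatrix is singular only on a measure-zero algebraic variety, and a union bound over the finitely many $\binom{n_i}{n_{i-1}}$ row-subsets preserves probability $1$. Crucially this is \emph{uniform} in $S$, so it applies to the random, data-dependent realized active set.

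Combining (A) and (B), layer-$i$ inversion succeeds with probability $1 - e^{-\Omega(n_{i-1})} \geq 1 - e^{-\Omega(k)}$. Iterating through the $d$ layers with sequential conditioning on earlier weights, and taking a union bound over layers, yields overall success probability $1 - d \cdot e^{-\Omega(k)} = 1 - e^{-\Omega(k)}$ (absorbing the constant $d$), and the total runtime is polynomial since each layer amounts to an $O(n_i \times n_{i-1})$ linear system.

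The core technical subtlety, and what I expect to be the main obstacle, is the dependence of $S$ on $W_i$ itself in claim (B): a naive ``fix $S$, then argue $(W_i)_{S,:}$ is almost surely full rank'' is illegitimate because $S$ is not chosen in advance but is produced by the random matrix via the ReLU pattern. The resolution is the uniform almost-sure statement---every row-subset of size at least $n_{i-1}$ yields a nonsingular submatrix simultaneously---which survives because we are union-bounding only finitely many measure-zero events. A minor side concern is avoiding the degenerate case $\bz_{i-1} = \mathbf{0}$ in (A), which I would handle by a short inductive argument that Gaussian ReLU layers almost surely preserve non-zeroness (with $\bz^* = \mathbf{0}$ trivial).
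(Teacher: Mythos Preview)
Your proposal is correct and follows essentially the same approach as the paper: show each layer has at least $n_{i-1}$ active coordinates via Hoeffding on the i.i.d.\ Bernoulli sign pattern, use almost-sure full rank of Gaussian submatrices to get uniqueness, then union-bound across layers. You are in fact more careful than the paper's own proof on the one subtle point---that the active set $S$ depends on $W_i$---which the paper glosses over with a one-line ``a Gaussian matrix is invertible with probability $1$''; your uniform-over-all-row-subsets resolution is the right fix.
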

In our proof, we show that with high probability the observation $\bx\in \R^n$ has at least $k$ non-zero entries, which forms $k$ equalities and the coefficient matrix is invertible with probability 1. Therefore the time complexity of exact recovery is no worse than $\sum_{i=0}^{d-1}n_i^{2.376}$~\cite{golub2012matrix} since the recovery simply requires solving $d$ linear equations with dimension $n_{i-1},i\in [d]$. 


\noindent \textbf{Inverting LeakyReLU Network:}~~
On the other hand, inversion of LeakyReLU layers are significantly easier for the realizable case. Unlike ReLU, LeakyReLU is a bijective map, i.e., each observation corresponds to a unique preimage:
\begin{equation}
\leakyrelu^{-1}(x)=\left\{\begin{array}{cc}
    x & \text{if }x\geq 0 \\
    1/c x & \text{otherwise.}
\end{array}\right.
\label{eqn:inverseLeaky}
\end{equation}
Therefore, as long as each $W_i\in \R^{n_i\times n_{i-1}}$ is of rank $n_{i-1}$, each layer map $\phi_i$ is also bijective and could be computed by the inverse of LeakyReLU \eqref{eqn:inverseLeaky} and linear regression.

\section{Invertibility for Noisy ReLU Networks}
Besides the realizable case, the study of noise tolerance is essential for many real applications. In this section, we thus consider the noisy setting with observation $\bx=G(\bz^*)+\be$, and investigate the approximate recovery for $\bz^*$ by relaxing some equalities in \eqref{eqn:polytope}. We also analyze the problem with both $\ell_{\infty}$ and $\ell_1$ error bound, in favor of different types of random noise distribution. In this section, all generators are without the bias term. 

\subsection{$\ell_{\infty}$ Norm Error Bound}
Again we start with a single layer, i.e. we observe $\bx=\phi(\bz^*)+\be=\relu(W\bz^*)+\be$. Depending on the distribution over the measurement noise $\be$, different norm in the objective $\|G(\bz)-\bx\|$ should be used, with corresponding error bound analysis. We first look at the case where the entries of $\be$ are uniformly bounded and the approximation of $\argmin_{\bz} \|\phi(\bz)-\bx\|_{\infty}$.

Note that for an error $\|\be\|_{\infty}\leq \epsilon$, the true prior $\bz^*$ that produces the observation $\bx=\phi(\bz^*)+\be$ falls into the following constraints:
\begin{eqnarray}
\nonumber
x_j-\epsilon \leq \bw_j^\top \bz \leq x_j +\epsilon & \text{if } x_j>\epsilon, j\in [n] \\
\bw_j^\top \bz \leq x_j +\epsilon & \text{if } x_j\leq \epsilon, j\in[n],
\label{eqn:noisy_program}
\end{eqnarray}
which is also equivalent to the set $\{\bz \big| \|\phi(\bz)-\bx\|_\infty\leq \epsilon\}$.
Therefore a natural way to approximate the prior is to use linear programming to solve the above constraints. 

If $\epsilon$ is known, inversion is straightforward from constraints \eqref{eqn:noisy_program}. However, suppose we don't want to use a loose guess, we could start from a small estimation and gradually increase the tolerance until feasibility is achieved. A layer-wise inversion is formally presented in Algorithm \ref{alg:invert_one_layer}\footnote{For practical use, we introduce a factor $\alpha$ to gradually increase the error estimation. In our theorem, it assumed we expicitly set $\epsilon$ to invert the $i$-th layer as the error estimation $\|\be\|_0(1/c_2)^{d-i}$.}.

A key assumption that possibly conveys the error bound from the output to the solution is the following assumption:
\begin{assumption}[Submatrix Extends $\ell_\infty$ Norm]
For the weight matrix $W\in \R^{n\times k}$, there exists an integer $m>k$ and a constant $c_\infty$, such that for any $I\subset [n]:=\{1,2,\cdots n\}, |I|\geq m $, $W_{I,:}$ satisfies 
\begin{equation*}
    \|W_{I,:}\bx\|_{\infty} \geq c_{\infty}\|\bx\|_{\infty},
\end{equation*}
with high probability $1-\exp(-\Omega(k))$ for any $\bx$, and $c_{\infty}$ is a constant. 
Recall that $W_{I,:}$ is the sub-rows of $W$ confined to $I$.

\label{assump:2}
\end{assumption}
With this assumption, we are able to show the following theorem that bounds the recovery error.






\begin{theorem}
\label{theorem:main}
Let $\bx=G(\bz^*)+\be$ be a noisy observation produced by the generator $G$, 
a $d$-layer ReLU network mapping from $\R^k\rightarrow \R^n$. Let each weight matrix $W_i\in\R^{n_{i-1}\times n_i}$ satisfies Assumption \ref{assump:2} with the integer $m_i>n_{i-1}$ and constant $c_\infty$. Let the error $\be$ satisfies $\|\be\|_{\infty}\leq \epsilon$, and for each $\bz_i=\phi_i(\phi_{i-1}(\cdots\phi(\bz^*)\cdots))$, at least $m_i$ coordinates are larger than $2(2/c_\infty)^{d-i}\epsilon$. Then by recursively applying Algorithm \ref{alg:invert_one_layer} backwards, it produces an $\bz$ that satisfies $\|\bz-\bz^*\|_{\infty}\leq (2/c_{\infty})^d\epsilon$ with high probability. 
\end{theorem}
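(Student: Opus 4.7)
The plan is to prove the bound by a layer-wise backward induction, tracking at each layer $i$ the $\ell_\infty$ error between the inferred intermediate vector $\tilde{\bz}_i$ and the true one $\bz_i=\phi_i(\cdots\phi_1(\bz^*)\cdots)$. I will set up the induction hypothesis as $\|\tilde{\bz}_i-\bz_i\|_\infty\le (2/c_\infty)^{d-i}\epsilon$, with the base case $i=d$ following immediately from $\tilde{\bz}_d:=\bx$ and $\|\be\|_\infty\le\epsilon$. Instantiating at $i=0$ then yields the claimed bound $\|\tilde{\bz}-\bz^*\|_\infty\le (2/c_\infty)^d\epsilon$.

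The heart of the argument is the inductive step. Assume $\|\tilde{\bz}_i-\bz_i\|_\infty\le\epsilon_i:=(2/c_\infty)^{d-i}\epsilon$. Algorithm~\ref{alg:invert_one_layer} solves the LP \eqref{eqn:noisy_program} with tolerance $\epsilon_i$ applied to layer $i$. First I check feasibility: since $\relu(W_i\bz_{i-1})=\bz_i$ and $\|\tilde{\bz}_i-\bz_i\|_\infty\le\epsilon_i$, the true $\bz_{i-1}$ satisfies every constraint, so the LP admits a solution $\tilde{\bz}_{i-1}$ with $\|\relu(W_i\tilde{\bz}_{i-1})-\tilde{\bz}_i\|_\infty\le\epsilon_i$. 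Next I identify an index set $I$ on which both ReLUs are surely active. By hypothesis at least $m_i$ coordinates $j$ satisfy $[\bz_i]_j>2\epsilon_i$; combining with $\|\tilde{\bz}_i-\bz_i\|_\infty\le\epsilon_i$ forces $[\tilde{\bz}_i]_j>\epsilon_i$, and the LP constraint then forces $[W_i\tilde{\bz}_{i-1}]_j\ge [\tilde{\bz}_i]_j-\epsilon_i>0$. Let $I$ collect these coordinates; $|I|\ge m_i$, and on $I$ both $\bw_{i,j}^\top\bz_{i-1}$ and $\bw_{i,j}^\top\tilde{\bz}_{i-1}$ equal their (unclipped) linear values, each within $\epsilon_i$ of $[\tilde{\bz}_i]_j$. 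A triangle inequality therefore gives $\|W_{i,I,:}(\bz_{i-1}-\tilde{\bz}_{i-1})\|_\infty\le 2\epsilon_i$. Finally, Assumption~\ref{assump:2} applied to the submatrix $W_{i,I,:}$ (whose row-count exceeds $m_i$) yields $c_\infty\|\bz_{i-1}-\tilde{\bz}_{i-1}\|_\infty\le 2\epsilon_i$, i.e. $\|\bz_{i-1}-\tilde{\bz}_{i-1}\|_\infty\le (2/c_\infty)\epsilon_i=\epsilon_{i-1}$, closing the induction.

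To convert the ``with high probability'' inside Assumption~\ref{assump:2} into a bound on the whole algorithm I will take a union bound over the $d$ layers. Since the per-layer failure probability is $\exp(-\Omega(n_{i-1}))$ and the layers are expanding, the total failure probability is still $\exp(-\Omega(k))$.

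The main obstacle I anticipate is the sign-pattern matching: Assumption~\ref{assump:2} only constrains $W_{I,:}$ for \emph{sufficiently large} index sets $I$, so I need to argue that the set of coordinates on which both the true and reconstructed activations are positive has cardinality at least $m_i$. This is precisely why the theorem's hypothesis requires $m_i$ coordinates of $\bz_i$ to exceed the inflated threshold $2(2/c_\infty)^{d-i}\epsilon=2\epsilon_i$: the factor $2$ is what lets the error $\epsilon_i$ propagate through layer $i$ without flipping any ReLU sign on $I$. A secondary subtlety is that Assumption~\ref{assump:2} must hold uniformly over the (exponentially many) possible index sets $I$ of size $\ge m_i$; I will rely on the phrasing of the assumption (or a standard $\epsilon$-net argument over the unit $\ell_\infty$ ball) to obtain this uniformity, and then the per-layer union bound mentioned above finishes the argument.
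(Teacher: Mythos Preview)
Your proposal is correct and follows essentially the same approach as the paper. The paper isolates the single-layer step as a separate lemma (Lemma~\ref{lemma:single_layer_noise}) showing $\|\bz-\bz^*\|_\infty\le 2\epsilon/c_\infty$ from $\|\be\|_\infty\le\epsilon$ via exactly your index-set argument and triangle inequality, and then states that Theorem~\ref{theorem:main} follows by applying that lemma backwards through the $d$ layers; your write-up simply unrolls this into an explicit induction with the same error recursion $\epsilon_{i-1}=(2/c_\infty)\epsilon_i$.
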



We argue that the assumptions required could be satisfied by random weight matrices sampled from i.i.d Gaussian distribution, and present the following corollary.
\begin{corollary}
\label{coro}
Let $\bx=G(\bz^*)+\be$ be a noisy observation produced by the generator $G$, a $d$-layer ReLU network mapping from $\R^k\rightarrow \R^n$. Let each weight matrix $W_i\in \R^{n_{i-1}\times n_i}$ ($n_i\geq 5n_{i-1}, \forall i$) be sampled from i.i.d Gaussian distribution $\sim \Ncal(0,1)$, then $W_i$ satisfies Assumption \ref{assump:2} with some constant $c_2\in (0,2]$. Let the error $\be$ satisfies $\|\be\|_{\infty}=\epsilon$, where $\epsilon<\frac{c_2^d}{2^{d+4}} \|\bz^*\|_2\sqrt{k}$. By recursively applying Algorithm \ref{alg:invert_one_layer}, it produces an $\bz$ that satisfies $\|\bz-\bz^*\|_{\infty}\leq \frac{2^d\epsilon}{c_2^d}$ with high probability.
\end{corollary}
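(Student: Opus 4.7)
The strategy is to invoke Theorem \ref{theorem:main} by verifying its two hypotheses for i.i.d.\ Gaussian weights: (a) Assumption \ref{assump:2} holds for each $W_i$ with a uniform constant $c_2>0$, and (b) each forward activation $\bz_i=\phi_i(\cdots\phi_1(\bz^*)\cdots)$ has at least $m_i$ coordinates above the required threshold $2(2/c_2)^{d-i}\epsilon$.

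\textbf{Step 1 (Assumption \ref{assump:2} for Gaussian $W_i$).} Fix any $\bx\in\R^{n_{i-1}}$ with $\|\bx\|_\infty=1$ (hence $\|\bx\|_2\ge 1$) and any $I\subseteq[n_i]$ with $|I|\ge m$. The entries of $(W_i)_{I,:}\bx$ are i.i.d.\ $\Ncal(0,\|\bx\|_2^2)$, so a standard Gaussian tail bound yields $\|(W_i)_{I,:}\bx\|_\infty\ge c$ for an absolute constant $c>0$ with failure probability $2^{-\Omega(|I|)}$. I then union-bound over (i) the at most $2^{n_i}$ choices of $I$ with $|I|\ge m$, and (ii) a $\delta$-net of the unit $\ell_\infty$-ball in $\R^{n_{i-1}}$ of size $(C/\delta)^{n_{i-1}}$. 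With $n_i\ge 5n_{i-1}$ and $m=\Theta(n_i)$ chosen so that $|I|\ge m$ forces the Gaussian exponent to dominate both combinatorial factors, the result is an absolute constant $c_2\in(0,2]$ that works uniformly over $\bx$, with failure probability $e^{-\Omega(n_{i-1})}$ for this layer.

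\textbf{Step 2 (magnitude condition).} I argue inductively on $i$. Conditional on $W_1,\dots,W_{i-1}$ (which fixes $\bz_{i-1}$), the vector $W_i\bz_{i-1}$ has $n_i$ fresh i.i.d.\ $\Ncal(0,\|\bz_{i-1}\|_2^2)$ entries. Anti-concentration combined with Chernoff implies that a constant fraction---at least $m_i$---of the coordinates of $\bz_i=\relu(W_i\bz_{i-1})$ exceed $\|\bz_{i-1}\|_2/10$ with probability $1-e^{-\Omega(n_i)}$, and simultaneously $\|\bz_i\|_2\ge c\sqrt{n_i}\,\|\bz_{i-1}\|_2$ on the same event. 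Iterating, the typical magnitude of a large coordinate of $\bz_i$ grows as $\Omega(\prod_{j<i}\sqrt{n_j/2})\,\|\bz^*\|_2$. Using $n_j\ge 5^jk$ and folding in the hypothesis $\epsilon<\frac{c_2^d}{2^{d+4}}\|\bz^*\|_2\sqrt{k}$, one checks that this lower bound dominates the required threshold $2(2/c_2)^{d-i}\epsilon$ at every layer $i\in[d]$.

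\textbf{Step 3 (assembly).} A union bound over all $d$ layers and both events per layer shows that the hypotheses of Theorem \ref{theorem:main} hold simultaneously with probability $1-e^{-\Omega(k)}$. Applying Theorem \ref{theorem:main} then yields $\|\bz-\bz^*\|_\infty\le(2/c_2)^d\epsilon$, which is the claim of the corollary.

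\textbf{Main obstacle.} The delicate point is the layered probabilistic composition: each matrix $W_i$ participates both in the submatrix lower bound (Step 1, an event on $W_i$ alone) and in the next activation $\bz_i$ (Step 2, an event on $W_i$ conditional on $\bz_{i-1}$). I will expose the randomness sequentially---conditioning on $W_1,\dots,W_{i-1}$, then applying the fresh Gaussian randomness in $W_i$---so that the two events for layer $i$ decouple into independent tail bounds before being composed across layers. A secondary arithmetic check concerns the first layer, where the required threshold $2(2/c_2)^{d-1}\epsilon$ is largest and $\bz_1$ has not yet benefited from multiple expansions; matching the induction at $i=1$ is exactly what the $\sqrt{k}$ factor in the assumed upper bound on $\epsilon$ buys.
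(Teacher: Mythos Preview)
Your overall strategy---verify Assumption~\ref{assump:2} for Gaussian weights, verify the per-layer magnitude condition on $\bz_i$, then invoke Theorem~\ref{theorem:main} and union bound---is exactly the paper's, and your Steps~2 and~3 track the paper's argument closely (the paper likewise uses that each coordinate of $W_i\bz_{i-1}$ is a fresh $\Ncal(0,\|\bz_{i-1}\|_2^2)$ variable, applies anti-concentration plus a binomial tail to get $\ge 2n_{i-1}$ large coordinates, and appeals to expansiveness to propagate $\|\bz_i\|_2\ge\|\bz^*\|_2$). The one genuine difference is Step~1. The paper does not argue directly in $\ell_\infty$; it instead imports the Rudelson--Vershynin smallest-singular-value bound (Lemma~\ref{lemma:small_singular}), which gives $\|W_{I,:}\bz\|_2\ge c_2\sqrt{|I|}\,\|\bz\|_2$ for a tall Gaussian submatrix, and then converts via $\|W_{I,:}\bz\|_\infty\ge\|W_{I,:}\bz\|_2/\sqrt{|I|}\ge c_2\|\bz\|_2\ge c_2\|\bz\|_\infty$. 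Your direct route---Gaussian max tail bound plus union bound over subsets $I$ and an $\ell_\infty$-net---is also valid and is in fact more explicit about uniformity over $I$, which the paper's write-up glosses over. The trade-off: your approach is self-contained but requires you to also control the operator norm of $W_{I,:}$ to pass from the net to all $\bx$ (a standard step you should spell out); the paper's singular-value detour avoids the net entirely and delivers a uniform-in-$\bx$ statement for free, at the price of citing an external result and of a potentially looser constant $c_2$ coming from the $\ell_2\!\to\!\ell_\infty$ conversion.
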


\begin{remark}
For LeakyReLU, we could do at least as good as ReLU, since we could simply view all negative coordinates as inactive coordinates of ReLU, and each observation will produce a loose bound. On the other hand, if there are significant number of negative entries, we could also change the linear programming constraints of Algorithm \ref{alg:invert_one_layer} as follows:
\begin{equation*}
\argmin_{\bz, \delta} ~\delta,~ \text{s.t. }~  \left\{\begin{array}{ll}
 x_j-\delta \leq \bw_j^\top \bz \leq x_j +\delta & \text{if } x_j>\epsilon\\
1/c(x_j-\delta)  \leq \bw_j^\top \bz \leq x_j+\delta  & \text{if } -\epsilon < x_j\leq \epsilon \\
x_j-\delta \leq c\bw_j^\top \bz \leq x_j +\delta & \text{if } x_j\leq -\epsilon\\
\nonumber 
\delta\leq \epsilon. 
\end{array}\right. 
\end{equation*}

\end{remark}

\subsection{$\ell_1$ Norm Error Bound}


In this section we develop a generative model inversion framework using the $\ell_1$ norm. We introduce Algorithm \ref{alg:invert_one_layer_l1} that tolerates error in different level for each output coordinate and intends to minimize the $\ell_1$ norm error bound. 

\begin{minipage}[t]{6.9cm}
\vspace{0pt} 
\begin{algorithm}[H]
\caption{Linear programming to invert a single layer with $\ell_{\infty}$ error bound ($\ell_\infty$ LP)}
\label{alg:invert_one_layer}
\begin{algorithmic}
\STATE {\bfseries Input:} Observation $\bx\in \R^n $, weight matrix $W=[\bw_1|\bw_2|\cdots|\bw_n]^\top$, initial error bound guess $\epsilon>0$, scaling factor $\alpha>1$.
\REPEAT 
\STATE 
Find $\argmin_{\bz, \delta}~ \delta$, s.t.
\begin{equation*} 
\left\{\begin{array}{ll}
x_j-\delta \leq \bw_j^\top \bz \leq x_j +\delta & \text{if } x_j>\epsilon\\
\nonumber
\bw_j^\top \bz \leq x_j +\delta & \text{if } x_j\leq \epsilon\\
\nonumber 
\delta\leq \epsilon &
\end{array}\right.
\end{equation*}
\STATE $\epsilon \leftarrow \epsilon \alpha$
\UNTIL{$\bz$ infeasible} 
\STATE {\bfseries Output:} $\bz$
\end{algorithmic}
\end{algorithm}
\end{minipage}\hspace{1cm}
\begin{minipage}[t]{6.9cm}
\vspace{0pt} 
\begin{algorithm}[H]
\caption{Linear programming to invert a single layer with $\ell_1$ error bound ($\ell_1$ LP)}
\label{alg:invert_one_layer_l1}
\begin{algorithmic}
\STATE {\bfseries Input:} Observation $\bx\in \R^n $, weight matrix $W=[\bw_1|\bw_2|\cdots|\bw_n]^\top$, initial error bound guess $\epsilon>0$, scaling factor $\alpha>1$.
\FOR{$t=1,2,\cdots$} 
\STATE 
$
\bz^{(t)}, \be^{(t)} \leftarrow \argmin_{\bz, \be}~ \sum_i e_i$, s.t. 
\begin{equation*} 
\left\{\begin{array}{ll}
x_j-e_j \leq \bw_j^\top \bz \leq x_j +e_j & \text{if } x_j>\epsilon\\
\nonumber
\bw_j^\top \bz \leq x_j +e_j & \text{if } x_j\leq \epsilon\\
\nonumber 
e_j\geq 0 & \forall j\in [n] 
\end{array}\right.
\label{eqn:noisy_program_l1}
\end{equation*}
\STATE $\epsilon \leftarrow \epsilon \alpha$
\IF{$\|\phi(\bz^{(t)})-\bx\|_1\geq \|\phi(\bz^{(t-1)})-\bx\|_1$}
\STATE {{\bfseries return} $\bz^{(t-1)}$}
\ENDIF 
\ENDFOR 
\end{algorithmic}
\end{algorithm}
\vspace{2pt} 
\end{minipage} 


\begin{remark}
\label{remark:l1_leaky}
Similar to $\ell_{\infty}$ LP, we could extend this $\ell_1$ LP to LeakyReLU by simply adding similar constraints for the negative observations.
\begin{align}
\nonumber
\bz^{(t)}, \be^{(t)} \leftarrow &\argmin_{\bz, \be}~ \sum_i e_i& \\
\nonumber
\text{s.t. }&~ x_j-e_j \leq \bw_j^\top \bz \leq x_j +e_j & \text{if } x_j>\epsilon\\
\nonumber
& 1/c(x_j-e_j) \leq \bw_j^\top \bz \leq x_j +e_j & \text{if }  \epsilon \leq x_j\leq \epsilon\\
\nonumber 
& x_j-e_j \leq c\bw_j^\top \bz \leq x_j+e_j & \text{if } x_j<-\epsilon \\
\nonumber 
& e_j\geq 0 & \forall j\in [n].
\end{align}
\end{remark}

Different from Algorithm \ref{alg:invert_one_layer}, the deviating error allowed on each observation is no longer uniform and the new algorithm is actually optimizing over the $\ell_1$ error. Similar to the error bound analysis with $\ell_{\infty}$ norm we are able to get some tight approximation guarantee under some mild assumption related to Restricted Isometry Property for $\ell_1$ norm:
\begin{assumption}[Submatrix Extends $\ell_1$ Norm]
\label{assump:RIP-1}
For a weight matrix $W\in \R^{n\times k}$, there exists an integer $m>k$ and a constant $c_1$, such that for any $I\subset [n], |I|\geq m$, $W_{I,:}$ satisfies 
\begin{equation}
    \|W_{I,:}\bx\|_1 \geq c_1\|\bx\|_1,
\end{equation}
with high probability $1-\exp(-\Omega(k))$ for any $\bx$.
\end{assumption} 
This assumption is a special case of the lower bound of the well-studied Restricted Isometry Property, for $\ell_1$-norm and sparsity $k$, i.e., $(k, \infty)$-RIP-1. Similar to the $\ell_{\infty}$ analysis, we are able to get recovery guarantees for generators with arbitary depth. 
\begin{theorem}
\label{thm:l1_approx}
Let $\bx=G(\bz^*)+\be$ be a noisy observation produced by the generator $G$, 
a $d$-layer ReLU network mapping from $\R^k\rightarrow \R^n$. Let each weight matrix $W_i \in \R^{n_{i-1}\times n_i}$ satisfy Assumption \ref{assump:RIP-1} with the integer $m_i>n_{i-1}$ and constant $c_1$. Let the error $\be$ satisfy $\|\be\|_1\leq \epsilon$, and for each $\bz_i=\phi_i(\phi_{i-1}(\cdots\phi(\bz^*)\cdots))$, at least $m_i$ coordinates are larger than $\frac{2^{d+1-i}\epsilon}{c_1^{d-i}}$. Then by recursively applying Algorithm \ref{alg:invert_one_layer_l1}, it produces a $\bz$ that satisfies $\|\bz-\bz^*\|_{1}\leq \frac{2^d\epsilon}{c_1^d}$ with high probability. 
\end{theorem}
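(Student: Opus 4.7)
The plan is to prove Theorem \ref{thm:l1_approx} by backward induction on the layers, reducing each step to a single-layer analysis parallel in structure to Theorem \ref{theorem:main}. Set $\eta_i := \frac{2^{d-i}\epsilon}{c_1^{d-i}}$, so that the theorem's hypothesis reads ``at least $m_i$ coordinates of $\bz_i$ exceed $2\eta_i$'' and the target bound is $\eta_0 = \frac{2^d\epsilon}{c_1^d}$. The inductive claim is that after Algorithm \ref{alg:invert_one_layer_l1} has been applied to layers $d, d-1, \ldots, i+1$, the recovered $\tilde{\bz}_i$ satisfies $\|\tilde{\bz}_i - \bz_i\|_1 \leq \eta_i$. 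The base case $i = d$ is immediate: take $\tilde{\bz}_d := \bx$, so $\|\bx - \bz_d\|_1 = \|\be\|_1 \leq \epsilon = \eta_d$. The inductive step reduces to the following single-layer statement: given a perturbed input $\tilde{\bz}_i = \phi_i(\bz_{i-1}) + \tilde{\be}_i$ with $\|\tilde{\be}_i\|_1 \leq \eta_i$, one call of Algorithm \ref{alg:invert_one_layer_l1} at threshold $\epsilon^{LP} \approx \eta_i$ produces $\tilde{\bz}_{i-1}$ with $\|\tilde{\bz}_{i-1} - \bz_{i-1}\|_1 \leq 2\eta_i/c_1 = \eta_{i-1}$.

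For the single-layer step, I would choose $\epsilon^{LP}$ strictly between $\eta_i$ and $2\eta_i$ and let $S := \{j : (\tilde{\bz}_i)_j > \epsilon^{LP}\}$ index the two-sided LP constraints. The simple observation $\|\tilde{\be}_i\|_\infty \leq \|\tilde{\be}_i\|_1 \leq \eta_i$ then produces two structural facts. First, each of the $m_i$ coordinates of $\bz_i$ exceeding $2\eta_i$ is perturbed by less than $\eta_i$, hence remains above $\epsilon^{LP}$, so $|S| \geq m_i$ and Assumption \ref{assump:RIP-1} applies to the submatrix $(W_i)_{S,:}$. Second, every $j \in S$ is truly active in the forward pass, since if $(W_i \bz_{i-1})_j \leq 0$ we would have $\phi_i(\bz_{i-1})_j = 0$ and hence $(\tilde{\bz}_i)_j = (\tilde{\be}_i)_j \in [-\eta_i, \eta_i]$, contradicting $j \in S$.

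Given these two facts, the remainder is LP bookkeeping. Exhibiting $(\bz_{i-1}, \{e_j = |(\tilde{\be}_i)_j|\})$ as a feasible solution of the LP shows the optimum $(\tilde{\bz}_{i-1}, \{e_j^{LP}\})$ satisfies $\sum_j e_j^{LP} \leq \|\tilde{\be}_i\|_1 \leq \eta_i$. For $j \in S$, combining the two-sided LP constraint $|(W_i \tilde{\bz}_{i-1})_j - (\tilde{\bz}_i)_j| \leq e_j^{LP}$ with the activeness identity $(\tilde{\bz}_i)_j = (W_i \bz_{i-1})_j + (\tilde{\be}_i)_j$ gives
\begin{equation*}
|(W_i(\tilde{\bz}_{i-1} - \bz_{i-1}))_j| \leq e_j^{LP} + |(\tilde{\be}_i)_j|.
\end{equation*}
Summing over $j \in S$ yields $\|(W_i)_{S,:}(\tilde{\bz}_{i-1} - \bz_{i-1})\|_1 \leq 2\eta_i$, and Assumption \ref{assump:RIP-1} converts this into $\|\tilde{\bz}_{i-1} - \bz_{i-1}\|_1 \leq 2\eta_i/c_1 = \eta_{i-1}$, closing the induction.

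The main obstacle is the joint calibration of $\epsilon^{LP}$: it must be large enough that no inactive coordinate (whose perturbed value equals the noise, bounded coordinatewise by $\eta_i$) can leak into $S$, yet small enough that every one of the $m_i$ large coordinates of $\bz_i$ survives the perturbation and falls into $S$. The $\ell_1$ budget leaves exactly this window, and the factor-of-$2$ gap between the large-coordinate threshold $2\eta_i$ and the noise budget $\eta_i$ is precisely what propagates into the factor $2/c_1$ in the recursion. A secondary technical point is that Algorithm \ref{alg:invert_one_layer_l1} sweeps $\epsilon^{LP}$ geometrically and returns the iterate of smallest forward residual; one still has to verify that the returned iterate is no worse than the iterate analyzed at $\epsilon^{LP} \approx \eta_i$, which follows from monotonicity of the LP's feasible region in $\epsilon^{LP}$ together with the stopping rule.
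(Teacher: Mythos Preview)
Your proposal is correct and follows essentially the same approach as the paper: a backward layer-by-layer induction, where at each layer one uses feasibility of the ground truth in the LP, optimality of the returned solution, the triangle inequality, and Assumption~\ref{assump:RIP-1} to obtain the recursion $\|\tilde{\bz}_{i-1}-\bz_{i-1}\|_1 \le (2/c_1)\,\|\tilde{\bz}_i-\bz_i\|_1$. Your write-up is in fact more thorough than the paper's sketch --- in particular, your use of $\|\tilde{\be}_i\|_\infty \le \|\tilde{\be}_i\|_1$ to certify both that $|S|\ge m_i$ and that every $j\in S$ is truly active is exactly the right mechanism, and the paper leaves this step implicit by analogy with the $\ell_\infty$ lemma.
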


There is a significant volume of prior work on the RIP-1 condition. For instance, studies in \citep{berinde2008combining} showed that a (scaled) random sparse binary matrix with $m = O(s \log(k/s)/\epsilon^2)$ rows is $(s, 1 + \epsilon)$-RIP-1 with high probability. In our case $s=k$ and $\epsilon$ could be arbitrarily large, therefore again we only require the expansion factor to be constant. Similar results with different weight matrices are also shown in \cite{nachin2010lower,indyk2013model,allen2016restricted}.

\subsection{Relaxation on the ReLU Configuration Estimation}
Our previous methods critically depend on the correct estimation of the ReLU configurations. In both Algorithm \ref{alg:invert_one_layer} and \ref{alg:invert_one_layer_l1}, we require the ground truth of all intermediate layer outputs to have many coordinates with large magnitude so that they can be distinguished from noise. An incorrect estimate from an "off" configuration to an "on" condition will possibly cause primal infeasibility when solving the LP. Increasing $\epsilon$ ameliorates this problem but also increases the recovery error.
%
%

With this intuition, a natural workaround is to perform some relaxation to tolerate incorrectly estimated signs of the observations. 
\begin{eqnarray}
\label{eqn:relax}
\max_{\bz} \sum_{i} \max\{0, x_i\}\bw_i^\top \bz, ~\text{s.t, }~ \bw_i^\top \bz \leq x_i + \epsilon.
\end{eqnarray}
Here the ReLU configuration is no longer explicitly reflected in the constraints. Instead, we only include the upper bound for each inner product $\bw_i^\top \bz$, which is always valid whether the ReLU is on or off. The previous requirement for the lower bound $\bw_i^\top\bz\geq x_i-\epsilon$ is now relaxed and hidden in the objective part. When the value of $x_i$ is relatively large, the solver will produce a larger value of $\bw_i^\top \bz$ to achieve optimality. Since this value is also upper bounded by $x_i+\epsilon$, the optimal solution would be approaching to $x_i$ if possible. On the other hand, when the value of $x_i$ is close to 0, the objective dependence on $\bw_i^\top \bz$ is almost negligible. 

Meanwhile, in the realizable case when $\exists \bz^*$ such that $\relu(W\bz^*)=\bx$, and $\epsilon=0$, it is easy to show that the solution set for \eqref{eqn:relax} is exactly the preimage of $\relu(W\bz)$. This also trivially holds for Algorithm \ref{alg:invert_one_layer} and \ref{alg:invert_one_layer_l1}. 

\begin{figure}
{\footnotesize
\centering
    \begin{tabular}{cccc}
    \multicolumn{2}{c}{Random Net} & \multicolumn{2}{c}{MNIST Net}\\
    \hspace{-0.8cm}        \includegraphics[width=0.28\linewidth]{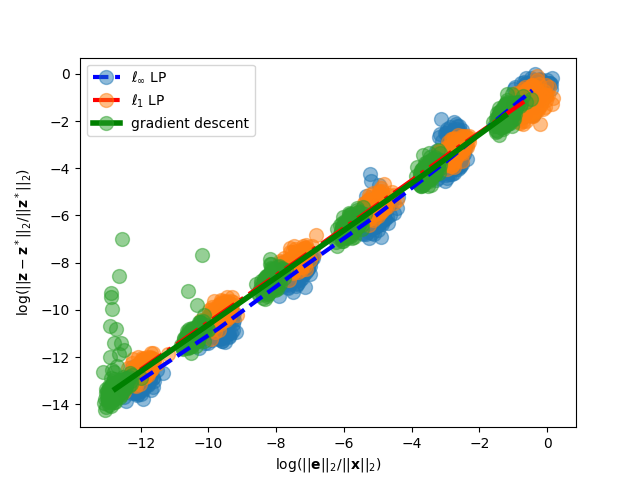} &\hspace{-0.8cm} \includegraphics[width=0.28\linewidth]{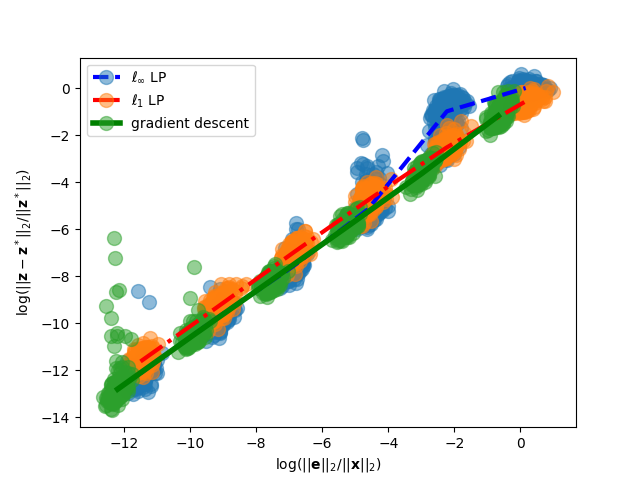} &\hspace{-0.8cm}
             \includegraphics[width=0.28\linewidth]{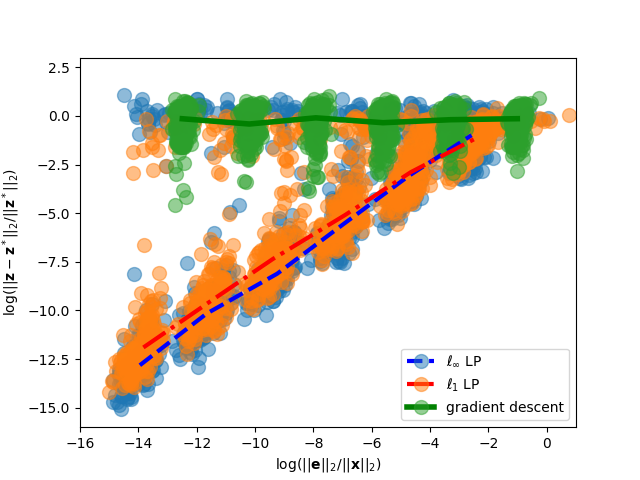} &\hspace{-0.8cm} \includegraphics[width=0.28\linewidth]{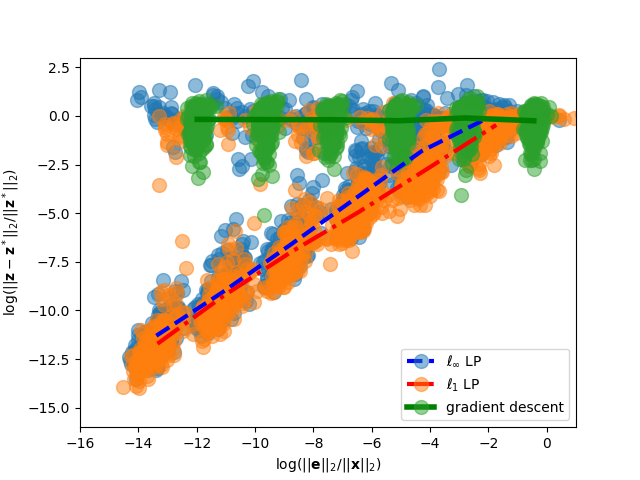} \hspace{-0.8cm}
        \\
        \hspace{-0.8cm} {\footnotesize (a) Uniform Noise} &\hspace{-0.8cm} {\footnotesize (b) Gaussian Noise}
    &        \hspace{-0.8cm} {\footnotesize (c) Uniform Noise} &\hspace{-0.8cm} {\footnotesize (d) Gaussian Noise}
    \end{tabular}
    \caption{Comparison of our proposed methods ($\ell_\infty$ LP and $\ell_1$ LP) versus gradient descent.
    On the horizontal axis we plot the relative noise level while on the vertical axis the relative recovery error. 
    In experiments (a)(b) the network is randomly generated and fully connected, with $20$ input neurons, $100$ hidden neurons and $500$ output neurons.
    This corresponds to an expansion factor of $5$. Each dot represents a recovery experiment (we have 200 for each noise level). Each line connects the median of the 200 runs for each noise level. As can be seen, our algorithm (Blue and Orange) has very similar performance to gradient descent, except at low noise levels where it is slightly more robust.  \\
    In experiments (c)(d) the network is generative model for the MNIST dataset. In this case, gradient descent fails to find global minimum in almost all the cases.
    }    \label{fig:dist_compare}
    }
\end{figure}

\section{Experiments}
In this section, we describe our experimental setup and report the
performance comparisons of our algorithms with the gradient descent method \cite{huang2018provably,hand2017global}. 
We conduct simulations in various aspects with Gaussian random weights, and a simple GAN architecture with MNIST dataset to show that our approach can work in practice for the denoising problem. We refer to our Algorithm \ref{alg:invert_one_layer} as $\ell_\infty$ LP and Algorithm \ref{alg:invert_one_layer_l1} as $\ell_1$ LP. We focus in the main text the experiments with these two proposals and also include some more empirical findings with the relaxed version described in \eqref{eqn:relax} in the Appendix.


\subsection{Synthetic Data}

We validate our algorithms on synthetic data at various noise levels and verify Theorem \ref{theorem:main} and \ref{thm:l1_approx} numerically. For our methods, we choose the scaling factor $\alpha=1.2$. With gradient descent, we use learning rate of $1$ and up to 1,000 iterations or until the gradient norm is no more than $10^{-9}$.

\textbf{Model architecture:} The architecture we choose in the simulation aligns with our theoretical findings. We choose a two layer network with constant expansion factor $5$: latent dimension $k=20$, hidden neurons of size $100$ and observation dimension $n=500$. The entries in the weight matrix are independently drawn from $\Ncal(0, 1/n_i)$. 

\textbf{Noise generation:} We use two kinds of random distribution to generate the noise, i.e., uniform distribution $U(-a,a)$ and Gaussian random noise $\Ncal(0,a)$, in favor of the $\ell_0$ and $\ell_1$ error bound analysis respectively. We choose $a\in \{10^{-i}|i=1,2,\cdots 6\}$ for both noise types.

\textbf{Recovery with Various Observation Noise:} 
In Figure \ref{fig:dist_compare}(a)(b) we plot the relative recovery error $\|\bz-\bz^*\|_2/\|\bz^*\|_2$ at different noise levels. It supports our theoretical findings that with other parameters fixed, the recovery error grows almost linearly to the observation noise. Meanwhile, we observe in both cases, our methods perform similarly to gradient descent on average, while gradient descent is less robust and produces more outlier points. As expected, our $\ell_{\infty}$ LP performs slightly better than gradient descent when the input error is uniformly bounded; see Figure \ref{fig:dist_compare}(a). However, with a large variance in the observation error, as seen in Figure \ref{fig:dist_compare}(b), $\ell_{\infty}$ LP is not as robust as $\ell_1$ LP or gradient descent. 

Additional experiments can be found in the Appendix including the performance of the LP relaxation that mimics $\ell_1$ LP
but is more efficient and robust. 


\begin{figure}
{\footnotesize 
    \centering
    \begin{tabular}{cc}
        \includegraphics[width=0.45\linewidth,height=0.25\linewidth]{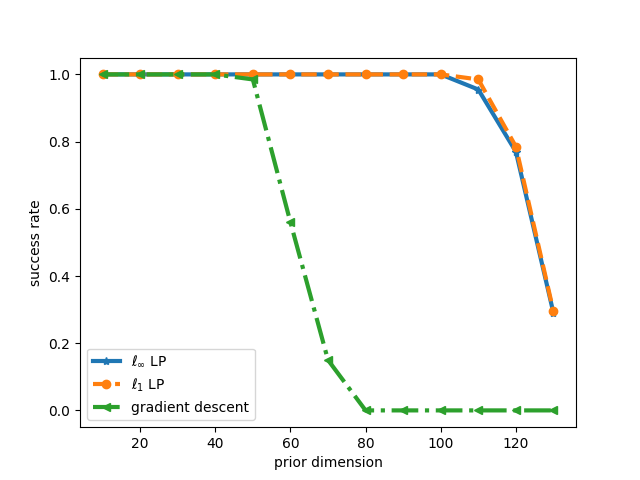} & \includegraphics[width=0.45\linewidth,height=0.25\linewidth]{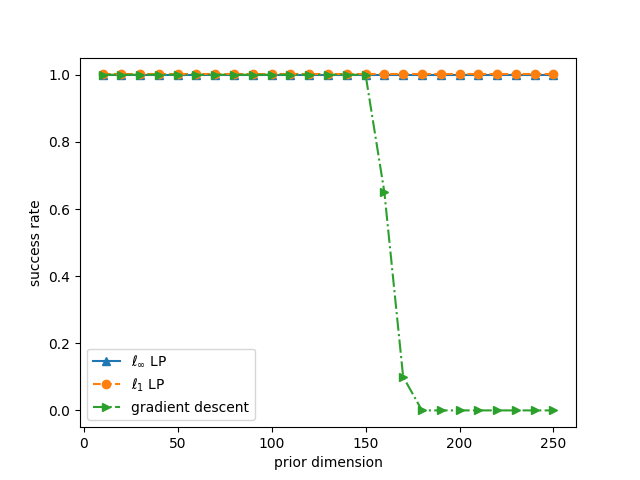}\\
        (a) ReLU & (b) LeakyReLU
    \end{tabular}
    
    \caption{Comparison of our method and gradient descent on the empirical success rate of recovery (200 runs on random networks) versus the number of input neurons $k$ for the noiseless problem. The architecture chosen here is a 2 layer fully connected ReLU network, with 250 hidden nodes, and 600 output neurons. Left figure is with ReLU activation and right one is with LeakyReLU. 
    Our algorithms are significantly outpeforming gradient descent for higher latent dimensions $k$.
    }\label{fig:success_recover}
    }
\end{figure}

\textbf{Recovery with Various Input Neurons:} According to the theoretical result, one advantage of our proposals is the much smaller expansion requirement than gradient descent \cite{hand2017global} (constant vs $\log k$ factors). Therefore we conduct the experiments to verify this point. We follow the exact setting as \cite{huang2018provably}; we fix the hidden layer and output sizes as $250$ and $600$ and vary the input size $k$ to measure the empirical success rate of recovery influenced by the input size. 

In Figure \ref{fig:success_recover} we report the empirical success rate of recovery for our proposals and gradient descent. With exact setting as in \cite{huang2018provably}, a run is considered successful when $\|\bz^*-\bz\|_2/\|\bz^*\|_2\leq 10^{-3}$. We observe that when input width $k$ is small, both gradient descent and our methods grant $100\%$ success rate. However, as the input neurons grows, gradient descent drops to complete failure when $k\geq$60, while our algorithms continue to present 100\% success rate until $k=109$. The performance of gradient descent is slightly worse than reported in \cite{huang2018provably} since they have conducted $150$ number of measurements for each run while we only considered the measurement matrix as identity matrix. 

\subsection{Experiments on Generative Model for MNIST Dataset}

\begin{figure}
    \centering
    \begin{tabular}{cccccccccc}
    \multicolumn{5}{c}{Observation}&\multicolumn{5}{c}{Ground Truth}\\
    \multicolumn{5}{c}{ \includegraphics[width=0.47\linewidth]{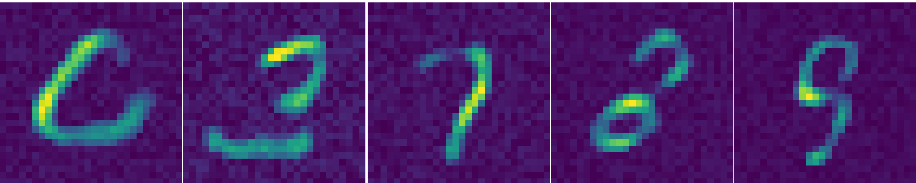} } & \multicolumn{5}{c}{\includegraphics[width=0.47\linewidth]{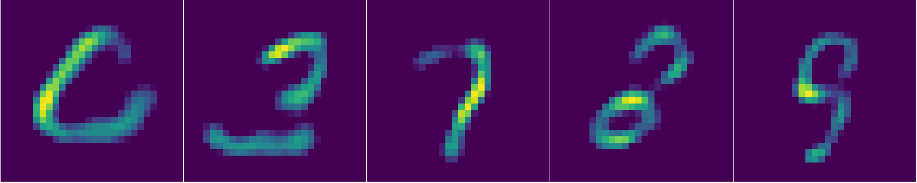} }\\
    \multicolumn{5}{c}{Ours ($\ell_\infty$ LP)}&\multicolumn{5}{c}{Gradient Descent \cite{huang2018provably}}\\ 
    \multicolumn{5}{c}{ \includegraphics[width=0.47\linewidth]{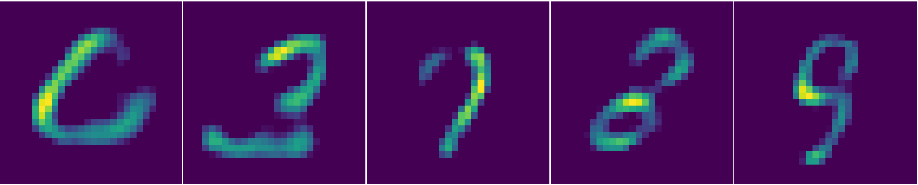} } & \multicolumn{5}{c}{\includegraphics[width=0.47\linewidth]{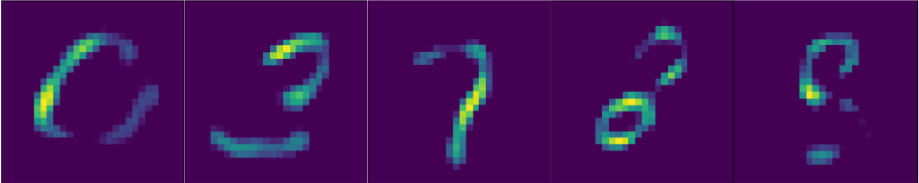} }\\
   \hspace{0.6cm} 0 & \hspace{0.85cm} 3 & \hspace{0.85cm} 7 & \hspace{0.87cm} 8 & \hspace{0.3cm} 9 & \hspace{0.7cm} 0 & \hspace{0.85cm} 3 & \hspace{0.85cm} 7 & \hspace{0.87cm} 8 & \hspace{0.3cm} 9
    \end{tabular}
    \caption{
    Recovery comparison using our algorithm $\ell_{\infty}$ LP versus GD for an MNIST generative model. Notice that $\ell_{\infty}$ LP produces reconstructions that are clearly closer to the ground truth.} 
    \label{fig:examples_mnist}
\end{figure}
\begin{figure}
    \centering
    \begin{tabular}{cccccccccc}
    \multicolumn{5}{c}{Observation}&\multicolumn{5}{c}{Ground Truth}\\
    \multicolumn{5}{c}{ \includegraphics[width=0.47\linewidth]{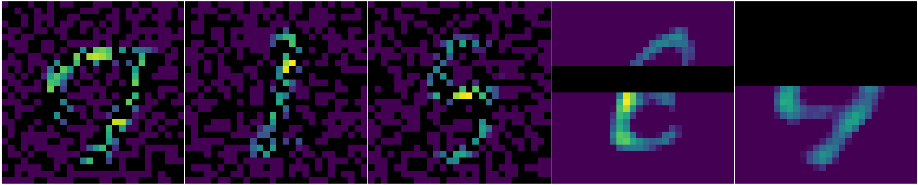} } & \multicolumn{5}{c}{\includegraphics[width=0.47\linewidth]{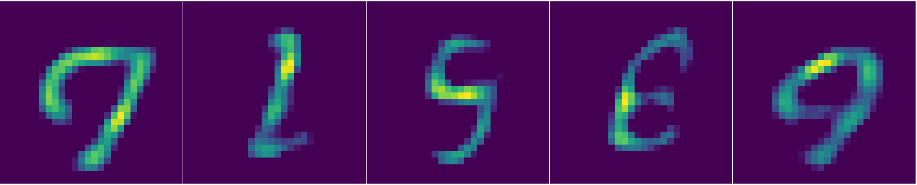} }\\
    \multicolumn{5}{c}{Ours ($\ell_\infty$ LP)}&\multicolumn{5}{c}{Gradient Descent \cite{huang2018provably}}\\ 
    \multicolumn{5}{c}{ \includegraphics[width=0.47\linewidth]{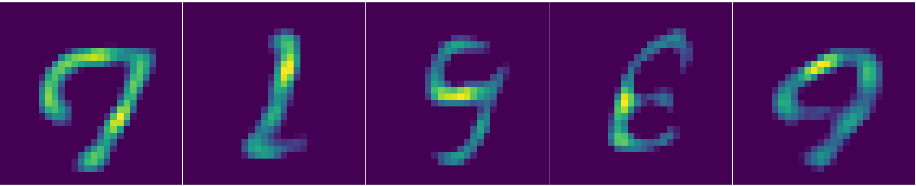} } & \multicolumn{5}{c}{\includegraphics[width=0.47\linewidth]{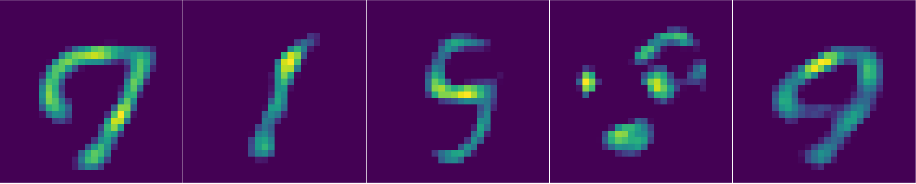} }\\
   \hspace{0.6cm} 7 & \hspace{0.85cm} 1 & \hspace{0.85cm} 5 & \hspace{0.87cm} 6 & \hspace{0.3cm} 9 & \hspace{0.7cm} 7 & \hspace{0.85cm} 1 & \hspace{0.85cm} 5 & \hspace{0.87cm} 6 & \hspace{0.3cm} 9
    \end{tabular}
    \caption{
    Recovery comparison with non-identity sensing matrix using our algorithm $\ell_{\infty}$ LP versus GD, for an MNIST generative model. The black region denotes unobserved pixels. Our algorithm always finds reasonable results while GD sometimes gets stuck at local minimum (See cases with number 1 and 5).}
    \label{fig:examples_mnist_sensing}
\end{figure}

To verify the practical contribution of our model, we conduct experiments on a real generative network with the MNIST dataset. We set a simple fully-connected architecture with latent dimension $k=20$, hidden neurons of size $n_1=60$ and output size $n=784$. The network has a single channel. We train the network using the original Generative Adversarial Network \cite{goodfellow2014generative}. We set $n_1$ to be small since the output usually only has around $70$ to $100$ non-zero pixels.

Similar to the simulation part, we compared our methods with gradient descent \cite{hand2017global,huang2018provably}. Under this setting, we choose the learning rate to be $10^{-3}$ and number of iterations up to 10,000 (or until gradient norm is below $10^{-9}$).

We first randomly select some empirical examples to visually show performance comparison in Figure \ref{fig:examples_mnist}. In these examples, observations are perturbed with some Gaussian random noise with variance $0.3$ and we use $\ell_\infty$ LP as our algorithm to invert the network. From the figures, we could see that our method could almost perfectly denoise and reconstruct the input image, while gradient descent impairs the completeness of the original images to some extent. 

We also compare the distribution of relative recovery error with respect to different input noise levels, as ploted in Figure \ref{fig:dist_compare}(c)(d). From the figures, we observe that for this real network, our proposals still successfully recover the ground truth with good accuracy most of the time, while gradient descent usually gets stuck in local minimum. This explains why it produces defective image reconstructions as shown in \ref{fig:examples_mnist}. 

Finally, we presented some sensing results when we mask part of the observations using PGD with our inverting procedure. As shown in Figure \ref{fig:examples_mnist_sensing}, our algorithm always show reliable recovery while gradient descent sometimes fails to output reasonable result. More experiments are presented in the Appendix. 

\section{Conclusion}
We introduced a novel algorithm to invert a generative model through linear programming, one layer at a time, given (noisy) observations of its output. We prove that for expansive and random Gaussian networks, we can exactly recover the true 
latent code in the noiseless setting. For noisy observations we also establish provable performance bounds.  
Our work is different from the closely related~\cite{huang2018provably} since we require less expansion, we 
bound for $\ell_1$ and $\ell_\infty$ norm (as opposed to $\ell_2$), 
and we also only focus on inversion, i.e., without a forward operator.
Our method can be used as a projection step to solve general linear inverse problems with projected gradient descent \cite{shah2018solving}.
Empirically we demonstrate good performance, sometimes outperforming gradient descent  when the latent vectors are high dimensional. 

\newpage
\bibliography{bibtex}
\bibliographystyle{plain}

\appendix
\section{Methodology Details}
In this section we present the detailed steps for our proposed methods.



\subsection{LP Relaxation}

We formally present the relaxed version based on \eqref{eqn:relax}:

{\centering
\begin{minipage}{.7\linewidth}
\begin{algorithm}[H]
\caption{Relaxed Linear programming to invert a single layer (LP relaxation)}
\label{alg:relax_lp}
\begin{algorithmic}
\STATE {\bfseries Input:} Observation $\bx\in \R^n $, weight matrix $W=[\bw_1|\bw_2|\cdots|\bw_n]^\top$, initial error bound guess $\epsilon>0$, scaling factor $\alpha>1$.
\FOR{$t=1,2,\cdots$} 
\STATE Solve the following linear programming:
\begin{align*}
\bz^{(t)}\leftarrow  \argmax_{\bz} & ~\sum_{i} \max\{0, x_i\}\bw_i^\top \bz\\
\text{s.t }&~ \bw_i^\top \bz \leq x_i + \epsilon
\end{align*}
\STATE $\epsilon \leftarrow \epsilon \alpha$
\IF{$t>2$ \AND $\exists \bz^{(t-1)}$ feasible \AND $\|\phi(\bz^{(t)})-\bx^*\|_1\geq \|\phi(\bz^{(t-1)})-\bx^*\|_1$}
\STATE {{\bfseries return} $\bz^{(t-1)}$}
\ENDIF 
\ENDFOR 
\end{algorithmic}
\end{algorithm}
\end{minipage}
\par
\vspace{4pt}
}

We also propose the relaxed LP for LeakyReLU activation, with key step as follows:
\begin{eqnarray}
\label{eqn:relax_leakyrelu}
&\max_{\bz} & \bx^\top W\bz \\
\nonumber 
 &   \text{s.t. }& 1/c\min\{x_i-\epsilon,0\} \leq \bw_i^\top \bz \leq \max\{x_i+\epsilon,0\}
\end{eqnarray}
Similarly when $\epsilon=0$ and $\exists \bz_0, \leakyrelu(W\bz_0)=\bx$, the solution to \eqref{eqn:relax_leakyrelu} is exactly $\bz_0$.

\section{Theoretical Analysis}
\label{sec:proof}
\subsection{Hardness} 
\noindent \textbf{Warm-up: NP-hardness to Invert a Binary Two-Layer Network:}\\
We show that 3SAT is reducible to the inversion problem. We first review the MAX-3SAT problem: 
Given a 3-CNF formula (i.e. a formula in conjunctive normal form where each clause is limited to at most three literals), determine its satisfiability. 

\begin{proof}[Proof of Theorem \ref{lemma:NP-hard}]
Now we design a network $G(\bz)=W_2\relu(W_1\bz+\bb_1)$ with binary input vectors that could be reduced from 3SAT problem. 

Firstly, let the network consists of $k$ input nodes $\bz:=\{z_1,z_2,\cdots z_k\}$. Next, the connecting layers $\bu:=\{u_1,\cdots u_{m}\} = \relu(W_1\bz+\bb_1)$ consists of $m$ nodes, where each node indicates one clause: $u_i$ will be connected to $3$ nodes among $z_j,j\in[k]$, where the weight is $-1$ for a positive literal, and $1$ for a negative literal. In other words, $W_1\in \R^{m\times n}$, where the $i$-th row of $W_1$ is 3-sparse, and corresponds to the 3 variables in the $i$-th clause. 
Let the bias for each $u_i$ to be $-2$, i.e. $(\bb_1)i=-2,\forall i\in [m]$. Therefore, only when none of the three literals is satisfied, $u_i$ will output $1$, otherwise the ReLU activation will make $u_i$ output 0. 

Afterwards, the final layer is one node that takes the summation of $u_i,i\in m$, i.e. $W_2=\bone \in \R^{1\times m}$. We will set the output to be $0$. Therefore only when all $m$ clauses are satisfied, the problem has feasible solutions. 
Therefore the original problem is also NP-hard.


\end{proof}

\noindent \textbf{NP-hardness for Real Network.}\\

\begin{proof}[Proof of Theorem \ref{thm:real_hard}]
Now we design a real-valued network that could be reduced from 3SAT problem. 
Firstly, the network consists of $k$ input nodes $\bz:=\{z_1,z_2,\cdots z_k\}$. Next, the connecting 2 layers map each $z_i$ to $v_i=\min\{\max\{z_i,-1\},1\}, i\in [k]$. Now, the third connecting layer $\bu:=\{u_1,\cdots u_{m+2}\}$ consists of $m+2$ nodes, where the first $m$ nodes indicate each clause: $u_i,i\leq m$ will be connected to $3$ nodes among $z_i,i\in[n]$, where the weight is $-1$ for a positive literal, and $1$ for a negative literal. 
Let $u_{m+1}=\sum_{i=1}^n \max\{z_i,0\}$, and $u_{m+1}=\sum_{i=1}^n -\min\{z_i,0\}$. The bias term on this third layer is $\bb$ such that the first $m$ values are $-2$ and the last two values are 0. Finally, the last layer $\bx$ is of 2 nodes, first one is the summation of the first $m$ nodes of $u_i$, and the second one is $u_{m+1}+u_{m+2}.$

We will set the output to be $\bx=[0,n]$. Notice the first two layers make sure each value of $u_i$ is in the range of $[-1,1].$ When the output of $x_2=n$, it means all values of $u_i$ must be $\pm 1$. Therefore we go back to the previous setting with binary input vectors and $x_1=0$ simply means that all $m$ clauses are satisfied. Therefore a 4 layered ReLU network could be polynomially reduced from 3SAT problem. 
\end{proof} 
\noindent \textbf{Proof of Non-convexity.}\\
The following example demonstrate this property is no longer true for a two-layer case:
\begin{example}
For $W_1=[[1,2],[3,1]]$, $W_2=[1,-1]$, and observation $x=1$, the solution set for $$\{\bz | G(\bz) = x\}, \text{ where }G(\bz)\equiv \relu(W_2\relu(W_1\bz)), $$ is non-convex.
\label{example1}
\end{example}

Example \ref{example1} is very straightforward to show the non-convexity of the preimage. Notice point $\bx_1=(-1,1)$ and $\bx_2=(1,3)$ are in the solution set, but their convex combination $x_3=\frac{x_1+x_2}{2}=(0,2)$ is not a solution point with $G(x_3)=2$.

\subsection{Proof of Exact Recovery for the Realizable Case}
The proof of Theorem \ref{theorem:main} highly depends on the exact inversion for a single layer:
\begin{lemma}
Under Assumption \ref{assump:0}, a mapping $\phi(x)=\relu(Wx), W\in \R^{n\times k}$ is injective with high probability $1-\exp(-\Omega(k))$. Namely, when $\phi(x)=\phi(y), x=y$. 
\label{lemma:single_layer_realize}
\end{lemma}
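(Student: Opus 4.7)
The plan is to reduce the injectivity of $\phi(\bx) = \relu(W\bx)$ to the combinatorial property ``every nonzero input activates at least $k$ neurons,'' and then to establish that property via Chernoff plus a union bound over sign patterns.

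First I would prove the reduction. Suppose $\phi(\bx) = \phi(\by) = \bv$ with $\bx \neq \by$, and let $S = \{i : v_i > 0\}$ be the common set of strictly positive output coordinates. On $S$ both inputs satisfy $\bw_i^\top \bx = v_i = \bw_i^\top \by$, so $W_{S,:}(\bx - \by) = \mathbf{0}$. Because the entries of $W$ are i.i.d.\ Gaussian, every collection of at least $k$ rows of $W$ is linearly independent almost surely, so $|S| \geq k$ already forces $\bx = \by$. Conversely, if some $\bx^* \neq \mathbf{0}$ has $|S(\bx^*)| < k$, then $\text{null}(W_{S(\bx^*),:})$ is non-trivial, and moving $\bx^*$ by a sufficiently small multiple of any null direction preserves the ReLU pattern of $W\bx^*$, producing a second preimage. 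Thus, almost surely over $W$, $\phi$ is injective iff $|S(\bx)| \geq k$ for every nonzero $\bx$, where $S(\bx) = \{i : \bw_i^\top \bx > 0\}$.

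Next I would prove the uniform lower bound on $|S(\bx)|$. For a fixed unit vector $\bx$, the coordinates $\bw_i^\top \bx$ are i.i.d.\ $\Ncal(0,1)$, so $|S(\bx)| \sim \mathrm{Binomial}(n,1/2)$, and under Assumption \ref{assump:0} its mean $n/2 \geq 1.05\,k$ sits comfortably above $k$; Chernoff gives $P(|S(\bx)| < k) \leq e^{-\Omega(k)}$. To upgrade from pointwise to uniform, I would enumerate sign patterns: a given $\sigma \in \{+,-\}^n$ is realized by some $\bx$ iff the signed vectors $\sigma_i \bw_i$ all lie in a common open halfspace, which by Wendel's theorem together with the sign symmetry of the Gaussian has probability exactly $P_n = 2^{1-n}\sum_{j=0}^{k-1}\binom{n-1}{j}$, identically in $\sigma$. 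Summing over the $\sum_{j=0}^{k-1}\binom{n}{j}$ patterns with fewer than $k$ positive entries bounds
\[
P\bigl(\exists \bx \neq \mathbf{0} : |S(\bx)| < k\bigr) \;\leq\; \Bigl(\sum_{j=0}^{k-1}\binom{n}{j}\Bigr)\cdot P_n,
\]
and standard binomial/entropy estimates turn this into $e^{-\Omega(k)}$ once the expansion $c_0 = n/k$ is large enough. Combined with the reduction, this gives injectivity with probability $1 - e^{-\Omega(k)}$.

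The delicate step will be the uniformity. Pointwise Chernoff is immediate and already works for $c_0 > 2$, but the naive sum over $\{+,-\}^n$ pays an extra $\binom{n}{k}$ factor that substantially inflates the expansion needed. Matching the stated constant in Assumption \ref{assump:0} likely requires sharpening the enumeration, either by restricting the count to the $2\sum_{j=0}^{k-1}\binom{n-1}{j}$ sign patterns actually realizable in a generic hyperplane arrangement (Schl\"afli/Zaslavsky), or by running an $\epsilon$-net argument on the sphere that tracks sign flips via Gaussian anti-concentration $P(|\bw_i^\top \bx| \leq t) = O(t)$ and the operator-norm bound $\|W\|_{op} = O(\sqrt{n})$. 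Either refinement slots into the same three-step template above; the bookkeeping for the tight constant is where the real work lies.
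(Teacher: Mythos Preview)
Your proposal is correct, and in fact it does strictly more than the paper. The paper's entire proof is your pointwise step and nothing else: for a \emph{fixed} $\bx$, the coordinates $\bw_i^\top\bx$ are i.i.d.\ centered Gaussians, so $|S(\bx)|\sim\mathrm{Bin}(n,1/2)$, and Hoeffding gives $\mathbb{P}(|S(\bx)|\le k)\le\exp(-2(n/2-k)^2/n)=e^{-\Omega(k)}$; since any $k$ rows of a Gaussian matrix are almost surely linearly independent, this pins down $\bx$ uniquely from $\phi(\bx)$. That is the full argument in the paper.

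So although the lemma is worded as ``$\phi$ is injective,'' the paper actually proves (and only uses) the pointwise statement: fixed input, random $W$. This suffices for Theorem~\ref{thm:LP-realize} because $\bz^*$ is fixed before the weights are drawn, and at each layer $\bz_i^*$ depends only on $W_1,\dots,W_i$ and is therefore independent of $W_{i+1}$. Your reduction step and your Chernoff paragraph already reproduce the paper's proof verbatim.

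Your additional push toward \emph{uniform} injectivity (a single good event on $W$ working for every $\bx$) is a genuinely stronger claim that the paper neither proves nor needs. Your Wendel-plus-union-bound outline is the right template for that question, and your diagnosis of the bottleneck is accurate: a back-of-the-envelope entropy estimate on $\bigl(\sum_{j<k}\binom{n}{j}\bigr)\cdot 2^{1-n}\sum_{j<k}\binom{n-1}{j}$ forces roughly $H(1/c_0)<1/2$, i.e.\ $c_0\gtrsim 9$, not $2.1$. So the gap you identify between pointwise and uniform is real content, not missing bookkeeping---but for the purposes of matching the paper you can simply stop after the pointwise argument.
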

\begin{proof}
Notice for each $i$-th index, $(Wx)_i$ is positive w.p. $1/2$. Therefore, the number of positive coordinates in $Wx$, denoted by variable $X$, follows Binomial distribution $ \text{Bin}(n, p)$, where $n=c_0k$ and $p=\frac{1}{2}$. With Hoeffding's inequality, $F(k; n, p):= \mathbb{P}(X\leq k)<\exp(-2\frac{(np-k)^2}{n})=\exp(-\Omega(k))$. 
Meanwhile, for a matrix with entries following Gaussian distribution, with probability 1 it is invertible. Therefore $\phi^{-1}$ could only have unique solution if there is one. 
\end{proof}
Within the proof of Lemma \ref{lemma:single_layer_realize}, we show that with high probability the observation $\bx\in \R^n$ has at least $k$ non-zero entries, meaning the original linear programming  has at least $k$ equalities. Therefore the corresponding $k$ rows forms an invertible matrix with high probability. Therefore simply by solving the linear equations we will attain the ground truth. 
\begin{proof}[Proof of Theorem \ref{thm:LP-realize}]
From Lemma \ref{lemma:single_layer_realize}, for each layer $\phi_i:\R^{n_{i-1}}\rightarrow \R^{n_{i}}$, with probability $1-\exp(-\Omega(n_i))$, and for each observed $\bz_{i} = \phi_i(\bz_{i-1}^*)$, by solving a linear system we are able to find $\bz_{i-1}^*$. By union bound, failure in the whole layerwise inverting process is upper bounded by $\sum_{i=1}^d \exp(-\Omega(n_i)) = \exp(-\Omega(k))$, since $n_{i}>2n_{i-1}$ for each $i$. 
\end{proof}

\subsection{$\ell_\infty$ error bound}

With Assumption \ref{assump:2}, we are able to show the following theorem that bounds the recovery error.

\noindent \textbf{Proof of Approximate Recovery with $\ell_\infty$ and $\ell_1$ Error Bound:}\\
Theorem \ref{theorem:main} depends on the layer-wise recovery of the intermediate ground truth vectors. We first present the following lemma for recovering a single layer with Algorithm \ref{alg:invert_one_layer} and then extend the findings to arbitrary depth $d$. 
\begin{lemma}[Approximate Inversion of a Noisy Layer with $\ell_\infty$ Error Bound]
\label{lemma:single_layer_noise}
Given a noisy observation $\bx=\phi(\bz^*):=\relu(W\bz^*)+\be$. Let $\epsilon=\|\be\|_{\infty}.$ If $W$ satisfies Assumption \ref{assump:2} with the integer $m>k$, and the observation $\bz^*$ has at least $m$ coordinates that is larger than $2\epsilon$, then Algorithm \ref{alg:invert_one_layer} outputs an $\bz$ that satisfies $\|\bz-\bz^*\|_{\infty}\leq \frac{2\epsilon}{c_{\infty}}$ with high probability $1-\exp(-\Omega(k))$. 
\end{lemma}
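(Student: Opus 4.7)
The plan is to establish two things: that $\bz^*$ itself is feasible for the linear program in Algorithm \ref{alg:invert_one_layer} with slack parameter $\delta = \epsilon$, and that \emph{any} optimal pair $(\bz,\delta)$ then satisfies $\|\bz-\bz^*\|_\infty \le 2\epsilon/c_\infty$. Combining the two yields the stated bound. For the theoretical analysis I would run the LP once with $\epsilon = \|\be\|_\infty$, using the footnote convention that bypasses the outer $\alpha$-scaling loop.

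For feasibility, I would verify the constraints coordinate by coordinate, using the identity $x_j = (\relu(W\bz^*))_j + e_j$ with $|e_j| \le \epsilon$. When $x_j > \epsilon$, noise alone cannot account for $x_j$, so the pre-activation satisfies $(W\bz^*)_j = x_j - e_j > 0$, giving $\bw_j^\top \bz^* = x_j - e_j \in [x_j-\epsilon,\,x_j+\epsilon]$. When $x_j \le \epsilon$, only the upper bound $\bw_j^\top \bz \le x_j + \epsilon$ is imposed, and this holds whether $(W\bz^*)_j \le 0$ (the left side is nonpositive while the right side $x_j+\epsilon \ge 0$) or $(W\bz^*)_j > 0$ (the left side equals $x_j - e_j \le x_j + \epsilon$). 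Therefore the LP returns some $(\bz,\delta)$ with $\delta \le \epsilon$.

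To convert this into a bound on $\|\bz - \bz^*\|_\infty$, I would isolate the coordinates on which the algorithm places a two-sided constraint. Define $I = \{j : (\relu(W\bz^*))_j > 2\epsilon\}$; by hypothesis $|I| \ge m$. For $j \in I$, the true pre-activation is positive and $x_j > 2\epsilon - \epsilon = \epsilon$, so the sandwich constraint forces $|\bw_j^\top \bz - x_j| \le \delta \le \epsilon$. Combining this with $|\bw_j^\top \bz^* - x_j| = |e_j| \le \epsilon$ and applying the triangle inequality gives $|\bw_j^\top(\bz - \bz^*)| \le 2\epsilon$ for every $j \in I$, i.e., $\|W_{I,:}(\bz - \bz^*)\|_\infty \le 2\epsilon$. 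Assumption \ref{assump:2} then yields $c_\infty \|\bz - \bz^*\|_\infty \le \|W_{I,:}(\bz - \bz^*)\|_\infty \le 2\epsilon$ with probability $1 - \exp(-\Omega(k))$, which rearranges to the claim.

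I expect the main (and essentially the only) conceptual obstacle to be choosing the set $I$ correctly and showing that its elements \emph{all} fall in the algorithm's two-sided branch. This is exactly what the $2\epsilon$ separation hypothesis buys: it provides a guard band wide enough that even a worst-case perturbation of $\pm\epsilon$ cannot push any large coordinate below the threshold $\epsilon$ at which the LP switches from equality-type to inequality-type constraints. Once $I$ is identified, the remainder is a one-line application of Assumption \ref{assump:2} to the submatrix $W_{I,:}$, and the high-probability qualifier in the lemma inherits directly from the high-probability qualifier in that assumption.
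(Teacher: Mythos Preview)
Your proposal is correct and follows essentially the same argument as the paper: identify a large index set on which the LP imposes two-sided constraints, bound $\|W_{I,:}(\bz-\bz^*)\|_\infty$ by $2\epsilon$ via the triangle inequality, and invoke Assumption~\ref{assump:2}. The only cosmetic difference is that the paper takes $I=\{j:x_j>\epsilon\}$ (the algorithm's actual two-sided branch) and argues $|I|\ge m$, whereas you take $I=\{j:(\relu(W\bz^*))_j>2\epsilon\}$ and argue each such $j$ lands in the two-sided branch; your set is a subset of theirs, but both satisfy $|I|\ge m$ and the rest is identical.
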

\begin{proof}
Denote $I=\{i| x_i>\epsilon\}$, and $\bx^*=\relu(W\bz^*)$ to be the true output. Notice it also satisfies $x^*_i>0, \forall i\in I$ from the error bound assumption. Since $\bx^*$ has more than $m$ entries $\geq 2\epsilon$, the observation $\bx$ satisfies $|I|\geq m$.
Notice for a feasible vector $\bz$ with constraints in \eqref{eqn:noisy_program}, it satisfies that 
\begin{eqnarray} 
\nonumber 
&&\|W_{I,:}\bz-(\bx^*)_{I}\|_{\infty}\\
&\leq& \|W_{I,:}\bz-\bx_{I}\|_{\infty} + \|\bx_I-\bx^*_I\|_\infty  \leq 2\epsilon,
\label{eqn:errorbound}
\end{eqnarray}
since the error is bounded uniformly for each coordinate in $\bx^*$. 
Meanwhile, notice the real $\bz^*$ satisfies $\phi_i(\bz^*) = x^*_i, \forall i \in I$, we have $W_{I,:}\bz^* = \bx^*_{I}$.
With Assumption \ref{assump:2}, $W_{I,:}$ satisfies $\|W_{I,:}\ba\|_{\infty} \geq c_\infty \|\ba\|_{\infty} $ for an arbitrary $\ba$ whp. Therefore together with \eqref{eqn:errorbound} and let $\ba = \bz-\bz^*$ and get:
\begin{eqnarray}
c_\infty \|\bz-\bz^*\|_\infty \leq \|W_{I}  (\bz-\bz^*)\|_\infty \leq 2\epsilon.
\end{eqnarray}
Therefore $\|\bz-\bz^*\|_{\infty} \leq \frac{2\epsilon}{c_\infty}$ with probability $1-\exp(\Omega(k))$.

\end{proof}

Theorem \ref{theorem:main} is the direct extension to the multi-layer case and we simply apply Lemma \ref{lemma:single_layer_noise} from $d$-th layer backwards to the input vector with initial $\ell_\infty$ error of $\epsilon(\frac{2}{c_\infty})^{d-i}$ for the $i$-th layer. 

Now we look at some examples that fulfill the assumptions. The proof of $\ell_{\infty}$ extension is not easy and we look at the following looser result instead. 

\begin{lemma}[Related result from \cite{Rudelson2009smallest}]
\label{lemma:small_singular}
For a sub-Gaussian random matrix $A$ with height $N$ and width $n$, where $N>2n$. Its smallest singular value 
$$ s_n(A):=\inf_{\|x\|_2=1} \|Ax\|_2.  $$
satisfies $s_n(A)\geq c_2\sqrt{N}$ with high probability $1-\exp(\Omega(n))$, where $c_2$ is some absolute constant.
\end{lemma}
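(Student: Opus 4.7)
The plan is to prove this classical smallest-singular-value bound via the standard $\varepsilon$-net plus concentration argument that underlies most modern analyses of tall random matrices.

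First I would fix a single unit vector $x \in S^{n-1}$ and control $\|Ax\|_2^2 = \sum_{i=1}^N \langle a_i, x\rangle^2$, where $a_i$ denotes the $i$-th row of $A$. Each $\langle a_i, x\rangle$ is a mean-zero sub-Gaussian random variable with variance $1$, so its square is sub-exponential with mean $1$. A standard Bernstein-type tail bound then yields
\begin{equation*}
\mathbb{P}\!\left(\|Ax\|_2^2 \leq N/2\right) \leq \exp(-c_1 N)
\end{equation*}
for an absolute constant $c_1 > 0$ depending only on the sub-Gaussian norm of the entries.

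Next I would discretize $S^{n-1}$ by a $\delta$-net $\Sigma$ with $|\Sigma| \leq (3/\delta)^n$ (the classical volumetric covering estimate) and apply a union bound, obtaining $\|Ax\|_2 \geq \sqrt{N/2}$ simultaneously for every $x \in \Sigma$ with probability at least $1 - (3/\delta)^n \exp(-c_1 N)$. To pass from $\Sigma$ back to the sphere, for arbitrary $y \in S^{n-1}$ I would pick the nearest point $x \in \Sigma$ and use
\begin{equation*}
\|Ay\|_2 \geq \|Ax\|_2 - \|A\|\,\|y-x\|_2 \geq \sqrt{N/2} - \|A\|\,\delta.
\end{equation*}
The operator norm is itself bounded by $\|A\| \leq C\sqrt{N}$ with probability $1-\exp(-\Omega(N))$, via the same net-plus-concentration template applied to pairs of unit vectors in $S^{n-1} \times S^{N-1}$. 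Picking $\delta$ to be a sufficiently small absolute constant (so that $C\delta < 1/(2\sqrt{2})$) then forces $\|Ay\|_2 \geq c_2 \sqrt{N}$ uniformly in $y$, with $c_2 > 0$ absolute.

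The main obstacle is the bookkeeping in the net step: the cardinality of $\Sigma$ injects a factor $\exp(n \log(3/\delta))$ into the failure probability, which must be absorbed by the concentration factor $\exp(-c_1 N)$. The hypothesis $N > 2n$ is exactly the slack needed, since with $\delta$ a fixed small constant, $c_1 N \geq 2c_1 n$ can be arranged to exceed $n \log(3/\delta) + \Omega(n)$, giving the claimed failure probability $\exp(-\Omega(n))$. Because the lemma only asserts existence of some absolute $c_2$, I would not chase the sharp constants of Rudelson--Vershynin; their sharper statements rely on delicate small-ball probability arguments that are not needed when $N/n$ is bounded away from $1$ as assumed here.
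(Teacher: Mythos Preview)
The paper does not actually prove this lemma; it is quoted as a known result of Rudelson--Vershynin, with the single remark that their theorem covers the much harder near-square regime $N > (1+\Omega(\log^{-1} n))n$ and that $N > 2n$ is a comfortable specialization. Your self-contained $\varepsilon$-net argument is therefore a genuinely different and more elementary route: Rudelson--Vershynin handle almost-square matrices via delicate small-ball probability estimates, whereas you exploit the generous aspect ratio to avoid all of that machinery, exactly as you observe in your closing sentence. What your approach buys is simplicity and self-containment; what the citation buys is a much sharper range of aspect ratios that the paper never needs.

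One caveat on your final bookkeeping step: the claim that $N > 2n$ is ``exactly the slack needed'' is not literally correct for an \emph{arbitrary} sub-Gaussian law. The concentration constant $c_1$ in your Bernstein bound scales like an inverse power of the sub-Gaussian norm $K$, while the net radius $\delta$ must be taken small relative to the operator-norm constant $C$, which grows with $K$; hence the inequality $2c_1 > \log(3/\delta)$ can fail when $K$ is large. The honest conclusion of the $\varepsilon$-net argument is $s_n(A)\geq c_K\sqrt{N}$ whenever $N > C_K\, n$, with both constants depending on $K$. In the paper's only application (Corollary~\ref{coro}) the entries are standard Gaussian, so $K$ is a fixed numerical constant and the factor $2$ is harmless; but as a statement about general sub-Gaussian matrices the lemma's ``absolute constant'' and your ``$N>2n$'' cannot both be taken literally without this $K$-dependence.
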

The original paper requires $N>(1+\Omega(\log^{-1}(n))n$ and we presented above with a relaxed condition that $N > 2n$. 

\begin{proof}[Proof of Corollary \ref{coro}]
With the aid of Lemma \ref{lemma:small_singular}, Assumption \ref{assump:2} is satisfied with $m=2n_{i-1}$ for each layer with high probability. This is because for a random Gaussian matrix $A\in \R^{n\times k}$, $c_2\sqrt{n} \|\bz\|_\infty \leq c_2\sqrt{n}\|\bz\|_2 \leq \|A\bz\|_2 \leq \sqrt{n}\|A\bz\|_\infty$ w.h.p.
Without loss of generality we assume $c_2\leq 2$. We hereby only need to prove that for each $i$-th layer, $i\in[d]$, the output $\bz_i^* = \phi_i(\phi_{i-1}(\cdots(\phi_1(\bz^*))\cdots ))\in \R^{n_i}$ satisfies: $\sum_{j=1}^{n_i}  \mathbbm{1}_{(\bz_i^*)_j>\frac{2^{d+1-i}\epsilon}{c_2^{d-i}}} >2n_{i-1}$ with high probability. 
We start with the input layer. Notice each entry of $\by:=W_1\bz^*$ follows $\Ncal(0,\sigma_1=\|\bz^*\|_2\sqrt{k})$, $\mathbb{P}(y_j>2\frac{2^d\epsilon}{c_2^d})\geq \mathbb{P}(y_j> \frac{\sigma_1}{8})>0.45$. Meanwhile, the number of coordinates in $\by$ that are larger or equal to $\frac{\sigma_1}{8}$ follows binomial distribution Bin$(n_1,p), p>0.45$. Therefore the number of valid coordinates $\geq 0.45n_1\geq 2k$ (since $n_{i+1}\geq 5n_i, \forall i$) with probability $1-\exp(-\Omega(k))$. Afterwards since $c_2<1/2$ and $\frac{2^{d-i+1}\epsilon}{c_2^{d-i}}, i> 1$ is always smaller than $\frac{\epsilon}{c_2^{d}}$ and $\|\bz_i^*\|_2\geq \|\bz^*\|_2$ with high probability since the network is expansive, the condition for the remaining layers is easier and also satisfied with probability at least $1-\exp(-\Omega(n_{i-1}))$. By using union bound over all layers, the proof is complete.


\end{proof}

The proof for the $\ell_1$ error bound analysis is similar to that of $\ell_\infty$ norm and we only show the essential difference. The key point in transmitting the error from next layer to previous layer is as follows:
\begin{align*} 
&\|W_{I,:}\bz_{i-1}-(\bz_i^*)_I\|_1\\
\leq&\|W_{I,:}\bz_{i-1}-(\bz_i)_I\|_1+\|(\bz_i)_I -(\bz_i^*)_I\|_1\\
\leq & 2\|(\bz_i)_I -(\bz_i^*)_I\|_1 \longrnote{(Optimality of Algorithm \ref{alg:invert_one_layer_l1} and $\bz_{i-1}^*$ being a feasible point) }
\end{align*}
Together with Assumption \ref{assump:RIP-1}, we have:
\begin{align*}
    &\|W_{I,:}\bz_{i-1}-(\bz_i^*)_I\|_1\geq c_1\|\bz_{i-1}-\bz_{i-1}^*\|_1\\
\Rightarrow & \|\bz_{i-1}-\bz_{i-1}^*\|_1 \leq \frac{2}{c_1}\|\bz_i-\bz_i^*\|_1.
\end{align*}
Here $\bz^*_i$ is the ground truth of $i$-th intermediate vector. $\bz_i$ is the one we observe and $\bz_{i-1}$ is the solution Algorithm \ref{alg:invert_one_layer_l1} produces.

\section{More Experimental Results}
\noindent\textbf{More Results on LP Relaxation.}\\
In Figure \ref{fig:dist_compare2}, we compare the performance with respect to different noise levels over all our proposals, including the results of Algorithm \ref{alg:relax_lp} that we omit in the main text. Although we do not see significant improvement of the LP relaxation method over our other proposals, we believe the relaxation over the strict ReLU configurations estimation is of good potential and should be more investigated in the future. 
\begin{figure}[htb]
\centering
    \begin{tabular}{cc}
        \includegraphics[width=0.35\linewidth]{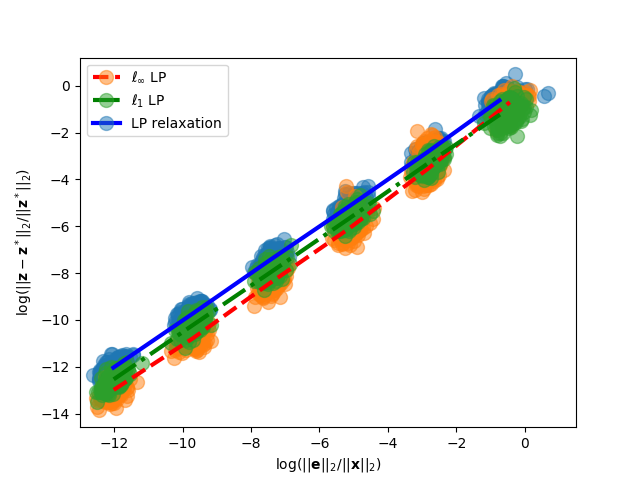} & \includegraphics[width=0.35\linewidth]{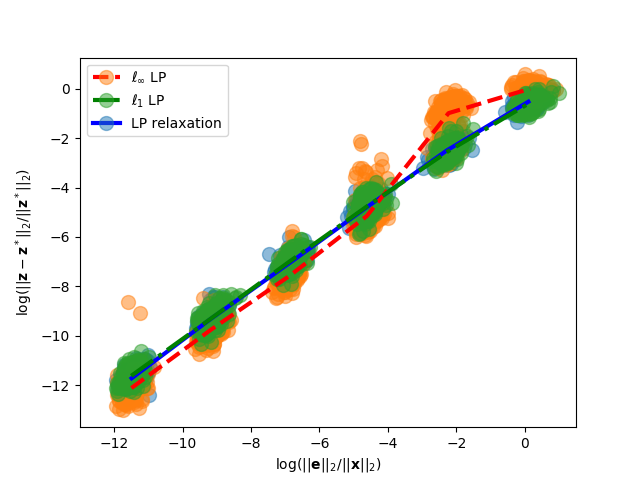} \\
         { (a) Uniform Noise; Random Net} & { (b) Gaussian Noise; Random Net}\\
     \includegraphics[width=0.35\linewidth]{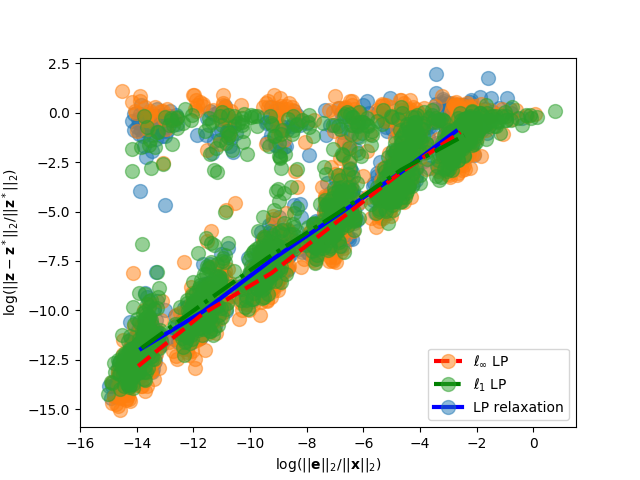} & \includegraphics[width=0.35\linewidth]{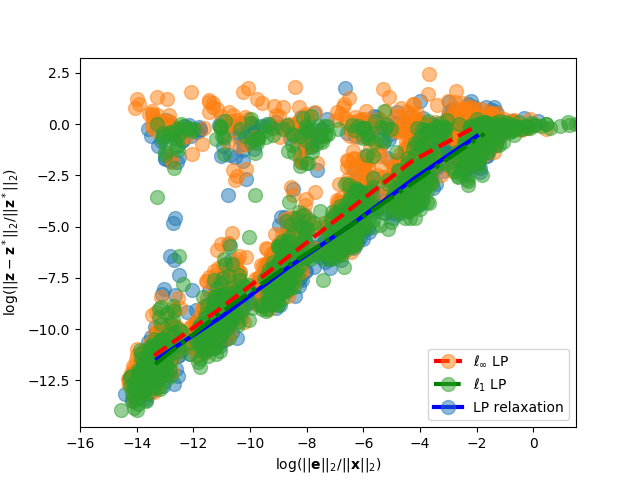} \\
     { (c) Uniform Noise; Real Net} & { (d) Gaussian Noise; Real Net}
    \end{tabular}
    \caption{Comparison of our proposed methods ($\ell_\infty$ LP, $\ell_1$ LP and LP relaxation). As can be shown, all three methods show no significant performance distinction. $\ell_{\infty}$ LP performs well in most cases except with large Gaussian noise. }
    \label{fig:dist_compare2}
\end{figure} 

\vspace{4pt}
\noindent\textbf{Time comparison.}~~\\
Firstly, we should declare that for the very well-conditioned random weighted networks, gradient descent converges with large stepsize and we don't observe much supriority over GD in terms of the running time. In the table below we presented the running time for random net with different input dimensions ranging from 10 to 110. 
\begin{table}[htb!]
    \centering
    \begin{tabular}{c|c|c|c|c|c|c|c}
        k & 10 & 30 & 50 & 70 & 90 & 110 & MNIST(k=20)\\
        \hline 
        $\ell_\infty$ LP & 0.63 & 0.73 & 0.83 & 0.90 & 0.95 & 1.03 & 0.5\\
        \hline 
        $\ell_1$ LP & 1.05 & 1.05 & 1.23 & 1.28 & 1.39 & 1.22 & 1.1\\
        \hline 
      LP relaxation & 0.66 & 0.53 & 0.58 & 0.76 & 0.75 & 0.70 & 0.6\\ 
      \hline 
        GD & 1.59 & 1.65 & 1.72 & 1.80 & 2.09 & 2.01 & 72
    \end{tabular}
    \caption{Comparison of CPU time cost averaged from 200 runs, including LP relaxation.}
    \label{tab:time_compare}
\end{table}
However, for MNIST dataset, since the weight matrices are not longer well-conditioned, a large learning rate makes GD to diverge, and we have to choose small learning rate 1e-3. The average running time for gradient descent to converge is roughly 1.2 minute, while for $\ell_0$ LP it only takes no more than 0.5 second.

\vspace{4pt}
\noindent\textbf{More experiments on the Sensing Problem.}\\
Finally we add some more examples for some impainting problem on MNIST with non-identity forward operator $A$.

\begin{figure}
    \centering
    \begin{tabular}{cccccccccc}
    \multicolumn{5}{c}{Observation}&\multicolumn{5}{c}{Ground Truth}\\
    \multicolumn{5}{c}{ \includegraphics[width=0.47\linewidth]{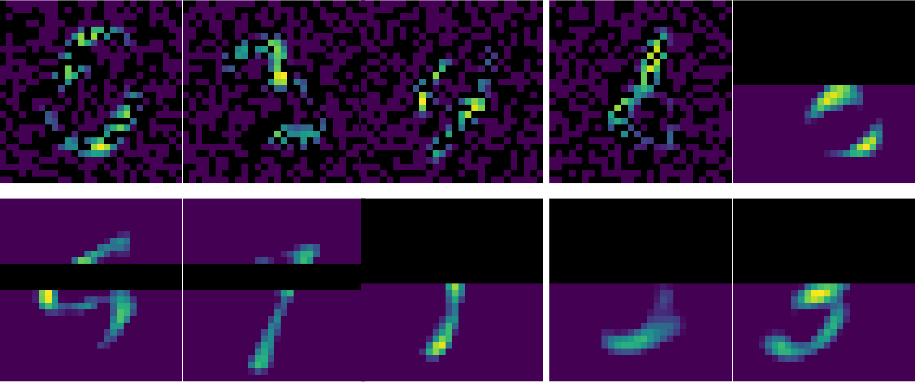} } & \multicolumn{5}{c}{\includegraphics[width=0.47\linewidth]{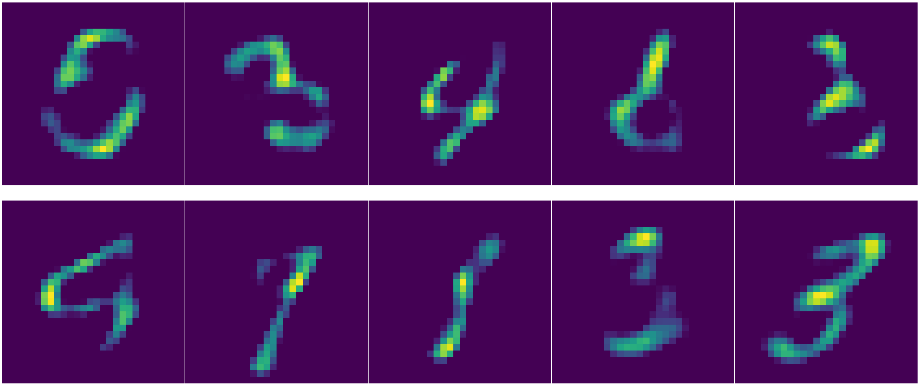} }\\
    \multicolumn{5}{c}{Ours ($\ell_\infty$ LP)}&\multicolumn{5}{c}{Gradient Descent \cite{huang2018provably}}\\ 
    \multicolumn{5}{c}{ \includegraphics[width=0.47\linewidth]{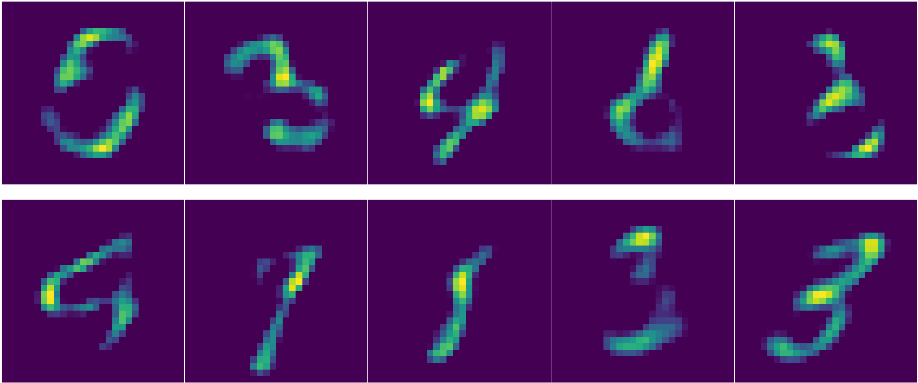} } & \multicolumn{5}{c}{\includegraphics[width=0.47\linewidth]{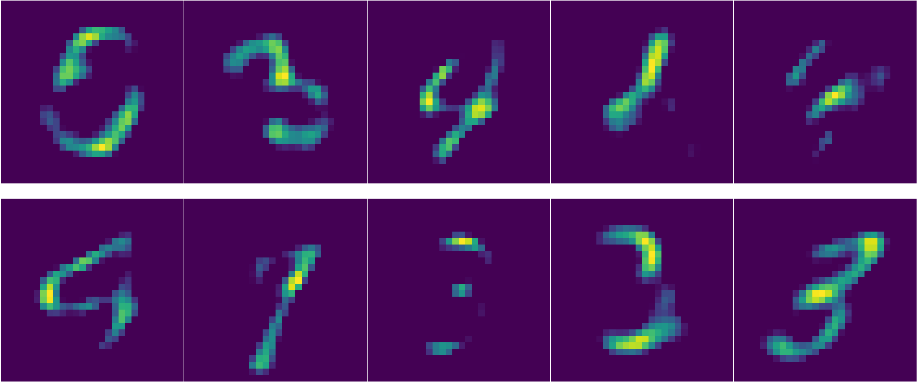} }
    \end{tabular}
    \caption{
    More recovery examples using our algorithm $\ell_{\infty}$ LP versus gradient descent for an MNIST generative model on some sensing problems.}
\end{figure}


\end{document}